\definecolor{darkblue}{rgb}{0.0, 0.0, 0.8}
\newcommand{\PreserveBackslash}[1]{\let\temp=\\#1\let\\=\temp}
\newcolumntype{C}[1]{>{\PreserveBackslash\centering}p{#1}}
\newcolumntype{R}[1]{>{\PreserveBackslash\raggedleft}p{#1}}
\newcolumntype{L}[1]{>{\PreserveBackslash\raggedright}p{#1}}
\newtheorem{proposition}{Proposition}
\newtheorem{thm}{Theorem}
\newtheorem{remark}{Remark}
\newtheorem{definition}{Definition}
\newtheorem{corollary}{Corollary}
\newcommand{\R}{\mathbb{R}}
\newcommand{\vol}{\operatorname{vol}}
\title{SVarM: Linear Support Varifold Machines for Classification and Regression on Geometric Data}
\author{Emmanuel Hartman, Nicolas Charon \\
  Department of Mathematics\\
  University of Houston\\
}
\begin{document}
\maketitle
\begin{abstract}
Despite progress in the rapidly developing field of geometric deep learning, performing statistical analysis on geometric data—where each observation is a shape such as a curve, graph, or surface—remains challenging due to the non-Euclidean nature of shape spaces, which are defined as equivalence classes under invariance groups. Building machine learning frameworks that incorporate such invariances, notably to shape parametrization, is often crucial to ensure generalizability of the trained models to new observations. This work proposes \textit{SVarM} to exploit varifold representations of shapes as measures and their duality with test functions $h:\R^n \times S^{n-1} \rightarrow \R$. This method provides a general framework akin to linear support vector machines but operating instead over the infinite-dimensional space of varifolds. We develop classification and regression models on shape datasets by introducing a neural network-based representation of the trainable test function $h$. This approach demonstrates strong performance and robustness across various shape graph and surface datasets, achieving results comparable to state-of-the-art methods while significantly reducing the number of trainable parameters.
\vskip1ex
\textbf{Keywords:} geometric deep learning, linear regression, classification, varifolds.
\end{abstract}

\section{Introduction}
Shape analysis, or geometric data science, is a field dedicated to building statistical and machine learning methods able to retrieve and analyze the morphological variability in geometric structures. This is a particularly central problem in applications such as computer vision or biomedical imaging where observations often come as segmented curves, surfaces, densities or other types of complex geometric data. Various approaches have been proposed, including Riemannian and elastic shape space models \cite{grenander1996elements,michor2007overview,jermyn2012elastic,younes2019shapes}, topology based methods \cite{ghrist2008barcodes,carlsson2009topology,edelsbrunner2017persistent}, and metric/functional matching frameworks \cite{bronstein2006generalized,memoli2011gromov,ovsjanikov2012functional}. These different methods have proved quite successful in tackling problems such as pairwise comparison, regression, classification or clustering for datasets of shapes. However, with the constant advances in acquisition protocols and the explosion in the size and resolution of datasets that followed, many such methods do not always scale well to recent applications that may involve databases with up to tens of thousands of subjects, each made of potentially hundreds of thousands of vertices. In view of the rapid development of new machine learning paradigms, in particular neural network models, and their impressive achievements in image processing and analysis tasks, one can reasonably expect similar tools to be able to address those challenges on geometric data. Yet, the very particular and intricate nature of shape spaces poses unique challenges when it comes to designing robust neural network models for shape analysis tasks. The importance of this problem within machine learning was recently recognized through the emergence of the research area known \textit{geometric deep learning}, see the surveys \cite{bronstein2017geometric,cao2020comprehensive,gerken2023geometric} and references therein for the current state-of-the-art.    

In contrast with many works in geometric deep learning that focus on developing neural network architectures (in particular convolutional models) defined directly on e.g. surface meshes \cite{kostrikov2018surface,jiang2019semi,hanocka2019meshcnn}, the approach we propose in this paper instead relies on the embedding of shapes into a certain measure space. Specifically, our learning framework leverages the concept of \textit{varifolds} from geometric measure theory \cite{Almgren66,Allard72} along with the natural representation of a shape $q \in \mathcal{S}$---such as a curve or a surface in $\R^n$---as its corresponding varifold $\mu_q \in \mathcal{V}$. The mapping $q\mapsto \mu_q$ is invariant under reparameterization, thereby providing a (nonlinear) embedding of the shape space $\mathcal{S}$ into $\mathcal{V}$, the space of measures on $\R^n \times S^{n-1}$. This enables us to instead build the learning framework on the space of varifolds by exploiting the duality of these measures with test functions. We here simply propose to rely on affine functions of $\mathcal{V}$ of the form $\langle \mu, h\rangle + \beta$, where $h \in C_0(\R^n \times S^{n-1})$ is a trainable test function and $\beta \in \R$ a trainable bias. Due to its analogy with support vector machines (but here considered in the infinite-dimensional varifold space), we coin our model \textbf{S}upport \textbf{Var}ifold \textbf{M}achines (SVarM) and derive formulations of the shape regression and classification problems based on this setting. This overall idea also shares certain similarities with recent works such as \cite{bhadra2025scalar} which introduces a regression model for function data based on the square root velocity function (SRVF) representation. Our use of varifolds provides, arguably, a more convenient setting when dealing with geometric instead of functional data. Importantly, our approach allows for a principled handling of complex shapes without requiring prior regularization or registration. Additionally, by focusing on approximating continuous functions in a low-dimensional space, the model remains lightweight, requiring significantly fewer trainable parameters than traditional models operating on raw geometric data as inputs.

The presentation and contributions of the paper are organized into three main sections. Section \ref{sec:maths_foundations} introduces the mathematical settings of shape spaces and of varifolds. We then proceed by deriving a few theoretical results related to the approximation properties of SVarM for regression and classification as well as its stability with respect to geometric perturbations. Section \ref{sec:netw_architecture} details our proposed neural network architecture, which approximates the trainable test function $h$ of the model. Finally, Section \ref{sec:numerical} demonstrates the practical effectiveness of the approach on various regression and classification problems for real surface and shape graph datasets. 





\begin{figure}[h!]
\centering
  \includegraphics[width=.9\textwidth]{ 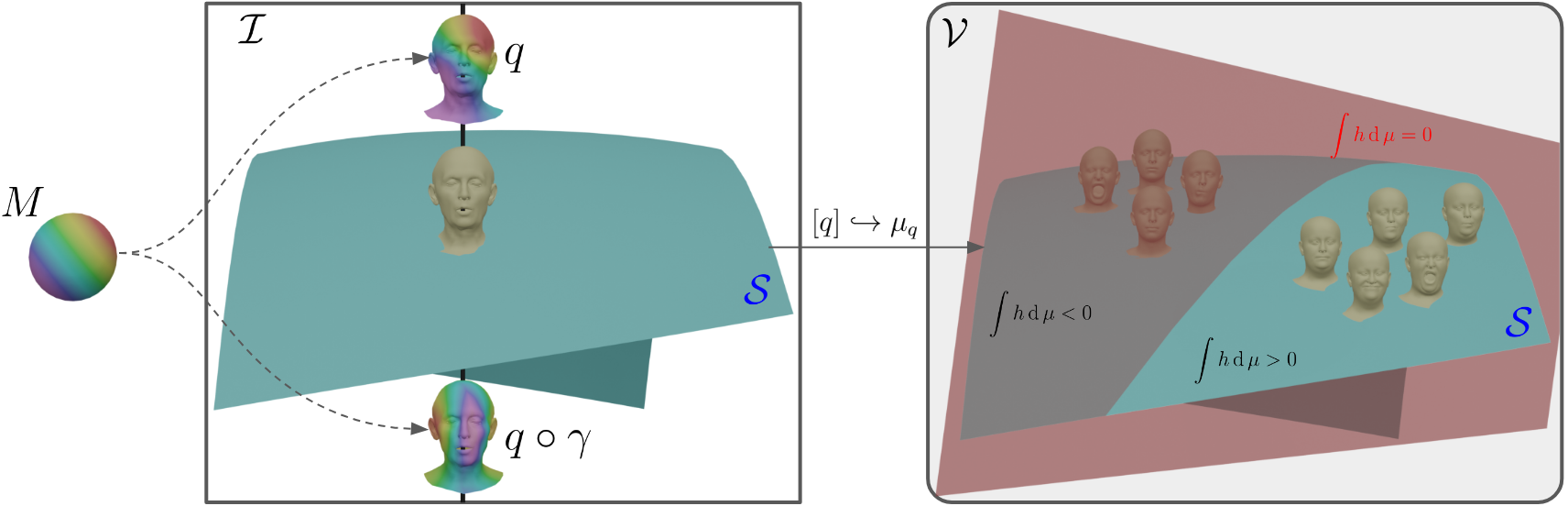} 
  \caption{Representation of the quotient shape space $\mathcal{S}$ (left) and its embedding in varifold space $\mathcal{V}$ (right). An linear form $\mu \mapsto \langle \mu , h\rangle$ used by SVarM is visualized as the red hyperplane in $\mathcal{V}$.}\label{fig:SVarM:graph_bastract}
\end{figure}

\section{Mathematical foundations and results}
\label{sec:maths_foundations}
\subsection{Shape Space} 
The concept of "shape" encompasses a wide variety of geometric structures (see e.g. \cite{younes2019shapes,bauer2014overview,srivastava2016functional} for general overviews on shape analysis). Although the framework we introduce can in principle apply to any type of shapes that can be viewed as measures or varifolds, our focus here will be on open/closed curves embedded in the plane or 3D space, surfaces in $\R^3$, and by extension structures known as shape graphs \cite{sukurdeep2022new,bal2024statistical}. The challenging aspect of dealing with such objects is that there is generally not a simple nor canonical way to represent them in a common Euclidean or even functional space. Indeed, a curve or surface is usually defined by a parametrization function $q:M \rightarrow \R^n$, where $n=2$ or $3$ and $M$ is the parameter manifold (for instance an interval or $S^1$ for open and closed curves, a disk or $S^2$ for open/closed surfaces$\ldots$). In this paper, we will consider rectifiable shapes where $q$ is assumed to be a bi-Lipschitz regular embedding from $M$ to $\R^n$, denoted by $q\in \mathcal{I} \doteq \operatorname{Emb}_{\text{Lip}}(M,\R^n)$. However, $q$ itself does not uniquely identify the shape $q(M)\subseteq \R^n$. The action of any reparametrization $q \circ \gamma$, where $\gamma \in \mathcal{D} = \operatorname{Diff}_{\text{Lip}}(M)$ is a bi-Lipschitz homeomorphism of $M$, leaves the shape $q(M)$ invariant. In other words, the actual shape associated to $q$ is in essence the equivalence class $[q]=\{q \circ \gamma: \ \gamma \in \operatorname{Diff}_{\text{Lip}}(M)\}$, and the shape space $\mathcal{S}$ can be thus identified with the quotient $\mathcal{S} = \mathcal{I}/\mathcal{D}$, see Figure \ref{fig:SVarM:graph_bastract} for an illustration. Note that quotienting out other shape invariances, such as to rigid motions of $\R^n$, can also be relevant in certain situations, but this work focuses on creating a parametrization-invariant model, as rigid alignment can usually be applied in a separate step. In fact, Section \ref{ssec:regression_exp} and Section \ref{ssec:full_rotation_exp} illustrates how SVarM can itself be used for that very task.

We want to add that the above shape space setting also comprises discrete objects, e.g. polygonal curves and triangulated surfaces. These correspond to the case when $M$ is subdivided into a 1- or 2-dimensional mesh and the parametrization $q$ is a Lipschitz piecewise affine map over this mesh, in which case $q(M)$ can be seen as a reunion of adjacent cells in $\R^n$. The issue of parametrization invariance still remains very relevant in this situation. Namely, a proper learning model should exhibit some degree of robustness to operations such as mesh subdivision, simplification or resampling.

\subsection{Varifolds} 
\label{ssec:varifolds}
The concept of varifold goes back to early works in geometric measure theory, notably \cite{Almgren66,Allard72}, where it provided a natural and convenient setting to reframe geometric variational problems into problems on measure spaces. In more recent years, multiple works in applied mathematics have leveraged varifold representations in the context of shape registration \cite{charon2013varifold,kaltenmark2017general,lee2024neural,stouffer2024cross}, computational geometry \cite{hsieh2021metrics,buet2022weak,paul2024sparse}, or computer vision \cite{pierson20223d,besnier2023toward,hartman2025basis}. One of the fundamental features explaining the interest in varifolds for these different problems is that it provides a representation of shapes, both continuous and discrete, that is independent of parametrization/sampling, as we shall explain below. This suggests that it can be well-suited in the design of learning models on shape spaces. 

\begin{definition}
    A varifold is a positive Radon measure on $\R^n \times S^{n-1}$ where $S^{n-1}$ denotes the unit sphere of $\R^n$. We shall write, in short, $\mathcal{V} \doteq \mathcal{M}^+(\R^n \times S^{n-1})$ the space of varifolds. 
\end{definition}
To be exact, the above definition corresponds to dimension/codimension 1 oriented varifolds of $\R^n$; a more general construction can be obtained by replacing $S^{n-1}$ with Grassmannians, c.f. \cite{Allard72,charon2013varifold}. We will then denote $\text{supp}(\mu) \subseteq \R^n \times S^{n-1}$ the \textit{support} of a varifold $\mu \in \mathcal{V}$. Furthermore, by the usual Riesz-Markov-Kakutani representation theorem, any varifold $\mu$ can be viewed as an element of the dual to $C_0(\R^n \times S^{n-1})$, the space of continuous test functions on $\R^n \times S^{n-1}$ vanishing at infinity and we shall use the notation $\langle \mu , h\rangle = \int_{\R^n \times S^{n-1}} h(x,v) d\mu(x,v)$ for the duality bracket between $\mathcal{V}$ and $C_0(\R^n \times S^{n-1})$. In particular, a Dirac varifold $\delta_{(x_0,v_0)}$ for some $(x_0,v_0) \in \R^n \times S^{n-1}$ is such that $\langle \delta_{(x_0,v_0)} , h\rangle =h(x_0,v_0)$.  

Now, going back to shape spaces, any Lipschitz embedded curve or surface $q:M \rightarrow \R^n$ (where $M$ is of dimension $1$ or $n-1$) can be canonically mapped to a varifold $\mu_q \in \mathcal{V}$ that is defined as the pushforward measure $\mu_q : =(q,v_q)_*\vol_q$, where $v_q$ denotes the unit tangent/normal vector field and $\vol_q$ is the induced length/area measure on $M$. More explicitly, for any $h\in C_0(\R^n\times S^{n-1})$,
\begin{equation}
    \langle \mu_q , h \rangle = \int_{\R^n\times S^{n-1}} h(x,v) \operatorname{d}\mu_q(x,v)= \int_M h(q(m),v_q(m)) \operatorname{d} \vol_q(m).
\end{equation}
Intuitively, $\mu_q$ represents the distribution of tangent/normal vectors at the different points of the shape $q(M)$ with local density given by the length/area measure along $q(M)$. A key property of the mapping $q \mapsto \mu_q$ is that only depends on the image set $q(M)$ and not reparametrizations:
\begin{thm}
\label{prop:varifold_invariance}
    For any $q\in \mathcal{I}$ and $\gamma\in\mathcal{D}$, it holds that $ \mu_{q\circ\gamma}=\mu_q$. Consequently, one obtains a well-defined map $[q] \in \mathcal{S} \mapsto \mu_q \in \mathcal{V}$ which, in addition, is injective. 
\end{thm}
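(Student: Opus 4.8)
The plan is to prove the statement in two parts: first the reparametrization invariance $\mu_{q\circ\gamma} = \mu_q$, then the injectivity of the induced map on $\mathcal{S}$.

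\textbf{Invariance.} For the first part, the natural tool is the change of variables formula for Lipschitz maps. Given $q \in \mathcal{I}$ and $\gamma \in \mathcal{D}$, write $\tilde q = q\circ\gamma$ and observe that the unit tangent/normal vector fields are related by $v_{\tilde q}(m) = \pm v_q(\gamma(m))$, with the sign depending only on whether $\gamma$ is orientation preserving; in the oriented varifold setting relevant here one restricts to orientation-preserving $\gamma$ so that $v_{\tilde q}(m) = v_q(\gamma(m))$. Since $\gamma$ is bi-Lipschitz, it is differentiable almost everywhere, and the induced volume (length/area) measures satisfy $\operatorname{d}\vol_{\tilde q}(m) = J\gamma(m)\,\operatorname{d}\vol_q(\gamma(m))$ in the appropriate sense, where $J\gamma$ is the (a.e.-defined) Jacobian of $\gamma$. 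Then, for any $h \in C_0(\R^n\times S^{n-1})$,
\begin{equation}
\langle \mu_{\tilde q}, h\rangle = \int_M h(\tilde q(m), v_{\tilde q}(m))\,\operatorname{d}\vol_{\tilde q}(m) = \int_M h(q(\gamma(m)), v_q(\gamma(m)))\,J\gamma(m)\,\operatorname{d}\vol_q(\gamma(m)),
\end{equation}
and applying the area formula / change of variables $m' = \gamma(m)$ collapses this to $\int_M h(q(m'), v_q(m'))\,\operatorname{d}\vol_q(m') = \langle \mu_q, h\rangle$. Since this holds for all test functions $h$, we conclude $\mu_{\tilde q} = \mu_q$. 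The main technical care here is justifying the change of variables at the level of Lipschitz regularity rather than smoothness, which is standard (e.g. via the area formula in \cite{Almgren66}) but should be cited rather than reproven. Once invariance holds, the map $[q]\mapsto \mu_q$ is well defined on $\mathcal{S} = \mathcal{I}/\mathcal{D}$ by construction.

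\textbf{Injectivity.} For the second part, suppose $\mu_{q_1} = \mu_{q_2}$ for $q_1, q_2 \in \mathcal{I}$; we must show $[q_1] = [q_2]$, i.e. that $q_1$ and $q_2$ differ by a reparametrization. The first step is to recover the underlying point sets: the support of $\mu_q$ projects onto $\overline{q(M)}$ under the canonical projection $\R^n\times S^{n-1}\to\R^n$, so $\mu_{q_1} = \mu_{q_2}$ forces $q_1(M) = q_2(M) =: X$ as subsets of $\R^n$. The second step is to recover the tangent/normal structure: since $q_i$ is a bi-Lipschitz embedding, $X$ is a Lipschitz $d$-rectifiable set ($d = 1$ or $n-1$) with an a.e.-defined approximate tangent space, hence an a.e.-defined unit (co)normal $v_X$ up to sign — and the orientation carried by the varifold picks one of the two. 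One shows that $\mu_q$ coincides with the canonical varifold $(\operatorname{id}, v_X)_*(\mathcal{H}^d\llcorner X)$ associated to the rectifiable set with its multiplicity; equality of the two varifolds then yields that $X$ with its orientation and unit density is the same for $q_1$ and $q_2$. The final step is to produce the reparametrization: define $\gamma = q_2^{-1}\circ q_1 : M \to M$, which is well defined and a bi-Lipschitz homeomorphism precisely because $q_1, q_2$ are bi-Lipschitz embeddings onto the same set $X$; then $q_1 = q_2\circ\gamma$ with $\gamma\in\mathcal{D}$, so $[q_1] = [q_2]$.

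The main obstacle is the injectivity argument, specifically the passage from equality of varifolds to equality of oriented rectifiable sets with multiplicity one. One must argue that for a bi-Lipschitz embedding the pushforward varifold $\mu_q$ has multiplicity exactly one $\mathcal{H}^d$-almost everywhere on $q(M)$ (so that no cancellation or folding hides part of the shape), and that the conormal direction in the support is $\mathcal{H}^d$-a.e.\ determined — this rests on rectifiability theory and the a.e.\ existence of approximate tangent planes, plus the fact that an embedding (as opposed to a mere immersion) visits each point of its image exactly once. I would handle the orientation-sign subtlety by noting that restricting $\mathcal{D}$ to orientation-preserving homeomorphisms (or, equivalently, working with unoriented varifolds over Grassmannians as mentioned in the text) removes the ambiguity; with that caveat the chain $q_2^{-1}\circ q_1$ is the desired element of $\mathcal{D}$. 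Everything else — the projection of the support, the change of variables — is routine given standard geometric measure theory, which I would invoke by citation.
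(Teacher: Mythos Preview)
Your invariance argument is essentially the paper's: both invoke the area formula for Lipschitz maps. The paper pushes the integral forward to the intrinsic form $\int_{S} h(x,v_S(x))\,d\mathcal{H}^d(x)$ over the image $S=q(M)$, which makes parametrization-independence immediate, whereas you perform the change of variables $m'=\gamma(m)$ directly on $M$; these are two presentations of the same computation. Your explicit remark on the orientation sign is actually more careful than the paper, which writes $v_S$ as if it were canonically defined.

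For injectivity the two routes genuinely differ. The paper argues the direct implication: assuming $[q]\neq[q']$ it asserts $q(M)\neq q'(M)$, picks $x_0\in q(M)$ with a ball disjoint from $q'(M)$, and exhibits a bump test function $h$ supported near $x_0$ so that $\langle\mu_q,h\rangle>0=\langle\mu_{q'},h\rangle$. You take the contrapositive: from $\mu_{q_1}=\mu_{q_2}$ you recover equal supports, hence $q_1(M)=q_2(M)$, and then construct $\gamma=q_2^{-1}\circ q_1\in\mathcal{D}$ explicitly. Both are correct. The paper's route is shorter because the separating-test-function step is trivial; your route is more constructive and makes explicit the lemma the paper uses without comment (bi-Lipschitz embeddings with the same image differ by an element of $\mathcal{D}$). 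One remark: your middle paragraph on multiplicity one and approximate tangent planes is unnecessary overhead for the injectivity claim here---once $q_1(M)=q_2(M)$ and both $q_i$ are embeddings, the composition $q_2^{-1}\circ q_1$ already lands in $\mathcal{D}$ with no further measure-theoretic reconstruction needed (that machinery would only be required for immersions or to pin down orientation when $\mathcal{D}$ is restricted to orientation-preserving maps).
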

\begin{proof}
This result and its proof are similar to that of previous works such as \cite{charon2013varifold,bauer2019relaxed}, albeit in a slightly different setting. We recap the main arguments, most of which rely on known results from geometric measure theory. Introducing all the related definitions and concepts in detail is out of the scope of this paper and we will thus refer the reader to \cite{simon2014introduction} for a precise exposition. 

First, we need to show that the $\text{Var}$ map is well-defined in the sense that $\mu_q$ is independent of the parametrization. Denoting the embedded shape $S=q(M)$, $S$ is a rectifiable subset of $\R^n$ of dimension $d$ ($d=1,2$ being the dimension of the parameter space $M$). Then the area formula (Theorem 3.3 in \cite{simon2014introduction}) implies in this case that:
\begin{align}
\label{eq:var_rect_set}
    \langle \mu_{q}, h\rangle = \int_M h(q(m), v_{q}(m)) d \, \text{vol}_{q}(m) = \int_{S} h(x,v_S(x)) d\mathcal{H}^{d}(x) 
\end{align}
where $\mathcal{H}^d$ denotes the $d$-dimensional Hausdorff measure of $\R^n$, and $v_S(x)$ is the tangent/normal vector to $S$ at $x$ which is defined for $\mathcal{H}^{d}$-a.a. $x \in S$. This shows that $\langle \mu_q,h\rangle$ only depends on $S=q(M)$ and not the parametrization $q$ itself.

Next we need to prove the injectivity of the varifold map on $\mathcal{S}$. Let $[q]$, $[q']$ be two distinct shapes of $\mathcal{S}$. Then $S=q(M)$ and $S'=q'(M)$ are two distinct rectifiable subsets of dimension $d$ of $\R^n$. Without loss of generality, we may assume that $\mathcal{H}^d(S\setminus S') >0$. By continuity, one can deduce that there exist $x_0 \in S$, $r>0$ and $\epsilon>0$ such that $\text{dist}((B(x_0,r),S')> \epsilon >0$. Then one can construct a nonnegative test function $h \in C_0(\R^n \times S^{n-1})$ such that $h(x,v)=1$ for any $x \in B(x_0,r)$ and $h(x,v)=0$ for $\|x-x_0\|\geq r+\epsilon$. From \eqref{eq:var_rect_set}, we obtain that:
\begin{equation*}
  \langle \mu_{q}, h\rangle = \int_{S} h(x,v_S(x)) d\mathcal{H}^{d}(x) \geq \int_{B(x_0,r)\cap S} h(x,v_S(x)) d\mathcal{H}^{d}(x) = \mathcal{H}^d(B(x_0,r)\cap S)>0
\end{equation*}
while
\begin{equation*}
  \langle \mu_{q'}, h\rangle = \int_{S'} h(x,v_S(x)) d\mathcal{H}^{d}(x)=0
\end{equation*}
since the support of $h$ does not intersect $S'$ by construction. It follows that $\langle \mu_q,h \rangle \neq \langle \mu_{q'},h \rangle$ and thus the two varifolds $\mu_q$ and $\mu_{q'}$ are distinct in $\mathcal{V}$.
\end{proof}
This property will be fundamental for what follows, as it allows us to embed any shape of $\mathcal{S}$ into $\mathcal{V}$ (see Figure \ref{fig:SVarM:graph_bastract}) and exploit the dual nature of $\mathcal{V}$.

Another advantage of the varifold framework is that it applies very naturally to the discrete geometric setting as well. Specifically, when $q$ is a piecewise affine embedding defined over a polyhedral mesh on $M$, the shape $q(M)$ is made of a reunion of polyhedral cells $\{F_i\}$ in $\R^n$ (segments or triangles) which can each be approximated by a single weighted Dirac varifold $w_i\delta_{(x_i,v_i)}$, with $x_i \in \R^n$, $v_i \in S^{n-1}$ and $w_i>0$ being respectively the center of mass, unit tangent/normal vector and length/area of $F_i$. This leads to a discrete varifold approximation $\mu_q \approx \sum_{i} w_i \delta_{(x_i,v_i)}$ of $q$. This type of approximations in $\mathcal{V}$ has been routinely used and analyzed in works such as \cite{kaltenmark2017general,buet2022weak,paul2024sparse}, to which we refer for more details.

\subsection{Theoretical Properties of SVarM}
\label{sec:theo_res}
\subsubsection{Existence of Regression Functions and Separators}
Some important theoretical questions when it comes to the proposed approach are to understand the class of regression and classification functions that are modeled by SVarM and what sets of varifolds or shapes can be separated in this framework. In this section, we present a few results in this direction.

Let us first recall a few definitions related to the different topologies on varifolds. The space of signed measures $\mathcal{M}(\R^n \times S^{n-1})$ can be identified with the topological dual to the space of test functions $C_0(\R^n \times S^{n-1})$ equipped with the $\|\cdot\|_\infty$ norm. As such, different topologies are usually considered on $\mathcal{M}(\R^n \times S^{n-1})$ and by restriction on $\mathcal{V}$. The first of which is the dual metric topology, i.e. the topology induced by the \emph{total variation} norm: 
\begin{equation}
    \label{eq:TV_norm_def}
    \|\mu\|_{TV}=\sup_{\|h\|_{\infty}\leq 1} |\langle \mu,h \rangle| = |\mu|(\R^n \times S^{n-1})
\end{equation}
In particular, for a varifold $\mu_q$ associated to $q \in \mathcal{I}$, one can see from \eqref{eq:var_rect_set} that $\|\mu_q\|_{TV}$ is the total length/area of the shape $q(M)$. The TV norm induces a rather strong topology on varifolds: for instance, the TV norm between two Dirac masses $\delta_{(x_0,v_0)}$ and $\delta_{(x_1,v_1)}$ is $2$ whenever $(x_0,v_0) \neq (x_1,v_1)$ and $0$ when the position and direction coincide. It is often more natural to consider the \emph{weak-* topology} on $\mathcal{M}(\R^n \times S^{n-1})$. By definition, a sequence $(\mu_i)$ weak-* converges to $\mu$, denoted $\mu_i \buildrel\ast\over\rightharpoonup \mu$, when $\langle \mu_i,h \rangle \xrightarrow[i\rightarrow \infty]{} \langle \mu, h \rangle$ for all $h \in C_0(\R^n \times S^{n-1})$. Note that the weak-* convergence can be also locally metrized by a variety of usual metrics on $\mathcal{M}(\R^n \times S^{n-1})$ such as the bounded Lipschitz norm or the Wasserstein metrics of optimal transport (for normalized measures), c.f. \cite{villani2008optimal} Chap. 6.

As a consequence of standard results on Banach spaces and weak-* topology, specifically Proposition 3.14 in \cite{brezis2011functional}, any linear function  $\mathcal{V} \rightarrow \R$ that is continuous with respect to the weak-* convergence is necessarily of the form $\mu \mapsto \langle \mu, h \rangle $ for some test function $h \in C_0(\R^n \times S^{n-1})$. It follows that the set of affine weak-* continuous functions $\varphi:\mathcal{V} \rightarrow \R$ are of the form $\varphi(\mu) = \langle \mu,h\rangle +\beta$ for some $h \in C_0(\R^n \times S^{n-1})$ and $\beta \in \R$. Thus we have the following:
\begin{proposition}
\label{prop:SVarM_affine_maps}
    The set of functionals modeled by SVarM are exactly the affine weak-* continuous functionals on $\mathcal{V}$. 
\end{proposition}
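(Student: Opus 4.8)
The statement is a double inclusion, so the plan is to treat the two directions separately. The easy inclusion is that every functional of the SVarM form $\varphi(\mu)=\langle\mu,h\rangle+\beta$ with $h\in C_0(\R^n\times S^{n-1})$ and $\beta\in\R$ is affine and weak-* continuous: affineness is immediate, and weak-* continuity of $\mu\mapsto\langle\mu,h\rangle$ is exactly the defining property of the weak-* topology, which an additive constant does not affect. All the substance is in the converse: every affine weak-* continuous $\varphi\colon\mathcal{V}\to\R$ has this form.

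For the converse, I would first reduce to the linear case. Setting $\beta\doteq\varphi(0)$ (note $0\in\mathcal{V}$) and $\psi\doteq\varphi-\beta$, affineness of $\varphi$ on the convex cone $\mathcal{V}$ forces $\psi$ to be additive, $\psi(\mu_1+\mu_2)=\psi(\mu_1)+\psi(\mu_2)$, and positively homogeneous, $\psi(t\mu)=t\psi(\mu)$ for $t\ge 0$: the homogeneity comes from applying the affine identity to $t\mu=(1-t)\cdot 0+t\cdot\mu$ for $t\in[0,1]$ and rescaling for $t>1$, and additivity then follows by applying it to the midpoint $\tfrac12(\mu_1+\mu_2)$ and using homogeneity. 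Next I would exhibit the candidate test function directly by setting $h(x,v)\doteq\psi(\delta_{(x,v)})$ on $\R^n\times S^{n-1}$. The map $(x,v)\mapsto\delta_{(x,v)}$ is weak-* continuous, since for each $h'\in C_0$ one has $\langle\delta_{(x,v)},h'\rangle=h'(x,v)$, which depends continuously on $(x,v)$; moreover $\delta_{(x,v)}$ weak-* converges to the zero measure as $|x|\to\infty$, because $h'\in C_0$ and $S^{n-1}$ is compact so "infinity" of $\R^n\times S^{n-1}$ is just $|x|\to\infty$. Composing with the weak-* continuous $\psi$ then shows that $h$ is continuous and that $h(x,v)\to\psi(0)=0$ at infinity, i.e. $h\in C_0(\R^n\times S^{n-1})$.

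It remains to verify $\psi=\langle\cdot,h\rangle$ on all of $\mathcal{V}$. By additivity and homogeneity this already holds on every finite discrete varifold $\mu=\sum_{i=1}^N w_i\delta_{(x_i,v_i)}$, since then $\psi(\mu)=\sum_i w_i\,h(x_i,v_i)=\langle\mu,h\rangle$. Since such discrete measures are weak-* dense in $\mathcal{V}$ — truncate an arbitrary $\mu\in\mathcal{V}$ to a large ball, the error against any $h'\in C_0$ being bounded by $\|\mu\|_{TV}$ times $\sup|h'|$ outside the ball, and then approximate the compactly supported truncation by atoms on a fine partition — and both $\psi$ and $\langle\cdot,h\rangle$ are weak-* continuous, they agree on all of $\mathcal{V}$; adding back $\beta$ finishes. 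I expect the main delicate point to be precisely this passage from the cone to the whole space: one could instead extend $\psi$ to a linear functional on $\mathcal{M}(\R^n\times S^{n-1})=\mathcal{V}-\mathcal{V}$ via the Jordan decomposition and invoke Proposition 3.14 of \cite{brezis2011functional} as in the discussion preceding the statement, but then one must check that the extension stays weak-* continuous (not merely norm continuous, as $C_0$ is non-reflexive), which is exactly the subtlety the direct Dirac argument avoids — so I would favor the latter route.
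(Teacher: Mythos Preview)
Your proof is correct and more detailed than the paper's. The paper does not give a separate proof: the proposition is simply announced as a consequence of the sentence preceding it, which invokes Proposition~3.14 of \cite{brezis2011functional} (weak-* continuous linear forms on a dual Banach space are exactly evaluations at predual elements) to assert that every weak-* continuous linear functional on $\mathcal{V}$ has the form $\mu\mapsto\langle\mu,h\rangle$, and then adds the bias $\beta$. Your route is genuinely different and more constructive: after reducing the affine $\varphi$ to an additive, positively homogeneous $\psi$ on the cone, you read off $h$ directly as $h(x,v)=\psi(\delta_{(x,v)})$, check $h\in C_0$ via weak-* continuity of the Dirac map and its vanishing at infinity, and extend to all of $\mathcal{V}$ by weak-* density of discrete measures. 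This buys you two things the paper glosses over. First, it handles explicitly the passage from the cone $\mathcal{V}$ to the full space $\mathcal{M}(\R^n\times S^{n-1})$ that Brezis's result literally requires; as you rightly note, extending $\psi$ via the Jordan decomposition is not obviously weak-* continuous, since $\mu\mapsto(\mu^+,\mu^-)$ is not. Second, your argument yields a concrete formula for $h$ and a transparent reason for its decay. The paper's approach is shorter and standard in spirit; yours is the more careful justification of the same statement.
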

However, this does not hold anymore when the continuity is with respect to the stronger TV metric topology due to the lack of reflexiveness of $\mathcal{V}$. In that case, one can only recover the following approximation bound over any finite set of varifolds:
\begin{proposition}
    Suppose $\varphi:\mathcal{V} \to \R$ is an affine function and continuous with respect to the TV norm topology and fix $C\subseteq \mathcal{V}$ any finite subset of the space of varifolds. For all $\epsilon>0$, there exists $h\in C_0(\R^n\times S^{n-1},\R)$ and $\beta \in \R$ such that 
    $$\sup_{\mu\in C}\left|\varphi(\mu) - (\langle \mu, h\rangle + \beta)\right|<\epsilon.$$
\end{proposition}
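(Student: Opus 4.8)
The plan is to prove the slightly stronger fact that $\varphi$ can be matched \emph{exactly} on the finite set $C$, which trivially yields the stated $\epsilon$-bound for every $\epsilon>0$. The first step is to replace the hypothesis by a statement about a genuine linear functional on the ambient Banach space. Setting $\beta\doteq\varphi(0)$ and $L\doteq\varphi-\beta$, the affine property of $\varphi$ (used with one of its two arguments equal to $0$) shows that $L$ is additive and positively homogeneous on the convex cone $\mathcal V$ and vanishes at $0$. Since every finite signed measure has a Jordan decomposition $\nu=\nu^+-\nu^-$ with $\nu^\pm\in\mathcal V$, one extends $L$ to a linear functional on $\mathcal M(\R^n\times S^{n-1})$ by $L(\nu)\doteq L(\nu^+)-L(\nu^-)$; well-definedness follows from additivity of $L$ on $\mathcal V$ applied to the identity $\nu^++\sigma^-=\sigma^++\nu^-$, and the TV-continuity of $\varphi$ upgrades to boundedness of $L$ on $\mathcal M(\R^n\times S^{n-1})$ (which, as it turns out, is not actually needed in the sequel).

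Next I would reduce to finite dimensions. Enumerating $C=\{\mu_1,\dots,\mu_N\}$, consider the linear evaluation map $E:C_0(\R^n\times S^{n-1})\to\R^N$ defined by $E(h)=(\langle\mu_1,h\rangle,\dots,\langle\mu_N,h\rangle)$, and let $W\doteq E\big(C_0(\R^n\times S^{n-1})\big)$, a linear and hence closed subspace of $\R^N$. The proposition reduces to showing that $\mathbf v\doteq(L(\mu_1),\dots,L(\mu_N))$ lies in $W$: any $h\in C_0(\R^n\times S^{n-1})$ with $E(h)=\mathbf v$ then satisfies $\varphi(\mu_i)=\langle\mu_i,h\rangle+\beta$ for all $i$, and such an $h$ vanishes at infinity by construction.

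To establish $\mathbf v\in W$, I would argue by duality in $\R^N$. Since $W$ is closed, $W=(W^\perp)^\perp$, so it suffices to check that $\mathbf v\perp a$ for every $a=(a_1,\dots,a_N)\in W^\perp$. By definition of $W$, such an $a$ satisfies $\langle\sum_i a_i\mu_i,h\rangle=0$ for all $h\in C_0(\R^n\times S^{n-1})$, and by the Riesz--Markov--Kakutani theorem — equivalently, by the formula for $\|\cdot\|_{TV}$ in \eqref{eq:TV_norm_def} — this forces $\sum_i a_i\mu_i=0$ in $\mathcal M(\R^n\times S^{n-1})$. Applying the linear functional $L$ from the first step then gives $\sum_i a_i L(\mu_i)=L\big(\sum_i a_i\mu_i\big)=0$, i.e.\ $\mathbf v\perp a$, as wanted.

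I expect the only genuine obstacle to be the first step: promoting a map that is a priori affine and continuous only on the non-linear cone $\mathcal V$ to an honest (bounded) linear functional on $\mathcal M(\R^n\times S^{n-1})$. After that, the argument is elementary finite-dimensional linear algebra together with the non-reflexivity of $C_0$; note there is no tension with the failure of the global statement, since $W$ need not equal $\R^N$ but always contains $\mathbf v$. An alternative to the last two steps, once $L$ is identified with an element of $\mathcal M(\R^n\times S^{n-1})^{**}$, is to invoke Goldstine's theorem — the canonical image of $C_0(\R^n\times S^{n-1})$ is weak-$*$ dense in its bidual — tested against $\mu_1,\dots,\mu_N$; this produces $h$ directly with $|\langle\mu_i,h\rangle-L(\mu_i)|<\epsilon$. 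I would nonetheless prefer the self-contained route above, as it gives equality rather than mere $\epsilon$-approximation.
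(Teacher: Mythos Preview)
Your proof is correct and takes a genuinely different route from the paper. The paper identifies the linear part of $\varphi$ with an element of the bidual $C_0(\R^n\times S^{n-1})^{**}$ and invokes Goldstine's theorem to produce test functions $h_i$ with $\langle\mu,h_i\rangle\to L(\mu)$ for every $\mu$; finiteness of $C$ then allows freezing an index achieving the $\epsilon$-bound simultaneously. You instead reduce to finite-dimensional linear algebra---showing that $(L(\mu_1),\dots,L(\mu_N))$ lies in the range of the evaluation map via the biorthogonality $W=(W^\perp)^\perp$---and thereby obtain an \emph{exact} representation $\varphi(\mu_i)=\langle\mu_i,h\rangle+\beta$ on $C$, strictly stronger than the stated $\epsilon$-approximation. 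As you observe, your argument never uses the TV-continuity hypothesis (only the affine structure and the Jordan decomposition), whereas the paper's route through the bidual does need it to place $L$ in $\mathcal M^{*}$'s dual. The paper's proof is shorter if one takes Goldstine as a black box, and you mention this as an alternative; conversely, your treatment of the extension step from the cone $\mathcal V$ to a linear map on all of $\mathcal M$ is more careful than the paper's, which passes over it without comment.
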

\begin{proof}
    Let us first consider the case of linear forms i.e. let $\varphi: \mathcal{M}(\R^n\times S^{n-1}) \to \R$ be a continuous linear functional, in other words $\varphi$ belongs to the bidual space $C_0(\R^n\times S^{n-1},\R)^{**}$.  By Goldstine theorem \cite{rudin1991functional}, $C_0(\R^n\times S^{n-1},\R)$ is dense in its bi-dual with respect to the weak-* topology on $C_0(\R^n\times S^{n-1},\R)^{**}$. Specifically, for the canonical embedding $J$ defined by:
    \begin{align}
        J:C_0(\R^n\times S^{n-1},\R)&\to C_0(\R^n\times S^{n-1},\R)^{**} \\
        h&\mapsto (\mu \in\mathcal{M}(\R^n\times S^{n-1}) \mapsto \langle \mu , h \rangle)
    \end{align} 
    we have $J(C_0(\R^n\times S^{n-1},\R))$ dense in $C_0(\R^n\times S^{n-1},\R)^{**}$. Consequently, we can find a sequence $(h_i)\subseteq C_0(\R^n\times S^{n-1},\R)$ so that $J(h_i)(\mu)=\langle \mu , h_i \rangle \to \varphi(\mu)$ as $i \rightarrow \infty$ for all $\mu\in\mathcal{M}(\R^n\times S^{n-1})$. Now , since $C$ is assumed to be finite, choosing $i$ large enough so that $|\varphi(\mu)-\langle \mu , h_i \rangle| < \epsilon$ for all $\mu \in C$ leads to the desired results. The extension to affine continuous functionals is then straightforward by incorporating the extra bias term $\beta \in \R$.
\end{proof}

For classification problems, we are interested in determining when two given sets of varifolds (or shapes) can be separated by SVarM. 
\begin{definition}
    Two disjoint sets $C_1,C_2 \subseteq \mathcal{V}$ are strictly separated by some test function $h$ when:
\begin{equation}
\label{eq:def_separated}
\sup_{\mu\in C_1} \langle \mu,h\rangle \,<\, \inf_{\nu\in C_2} \langle \nu,h\rangle
\end{equation} 
\end{definition}
Note that by introducing the extra bias parameter $\beta \in \R$, the above could be reformulated equivalently as $\sup_{\mu \in C_1} \langle \mu,h\rangle + \beta < 0 < \inf{\nu \in C_2}\langle \nu,h\rangle +\beta$, i.e. in a way analogous to separation by a hyperplane in classical SVM. A first generic result, that follows from the Hahn-Banach separation theorem is:  
\begin{thm}\label{thm:separating_convex}
    Suppose $C_1, C_2 \subseteq \mathcal{V}$ are convex weak-* compact subsets such that $C_1\cap C_2 = \emptyset$.
    Then there exists a test function $h\in C_0(\R^n\times S^{n-1},\R)$ that separates $C_1$ and $C_2$ in the sense of \eqref{eq:def_separated}. 
\end{thm}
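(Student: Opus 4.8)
The plan is to invoke the Hahn--Banach separation theorem in its geometric form on the locally convex topological vector space $\mathcal{M}(\R^n\times S^{n-1})$ equipped with the weak-* topology, whose topological dual is precisely $C_0(\R^n\times S^{n-1},\R)$ (via the embedding $J$ recalled in the previous proof). The key structural fact is that $C_1$ and $C_2$ are both convex, one of them is compact and the other is closed (indeed both are compact) in this topology, and they are disjoint; this is exactly the hypothesis needed for the strict separation version of Hahn--Banach.

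First I would recall that $(\mathcal{M}(\R^n\times S^{n-1}), \text{weak-*})$ is a Hausdorff locally convex space and that its continuous linear functionals are exactly the maps $\mu \mapsto \langle \mu, h\rangle$ for $h \in C_0(\R^n\times S^{n-1},\R)$. Then, since $C_1$ is weak-* compact, $C_2$ is weak-* closed, they are convex and disjoint, the Hahn--Banach separation theorem (e.g.\ Brezis, Theorem 1.7, or Rudin \emph{Functional Analysis} Theorem 3.4) yields a continuous linear functional $\Phi$ on $\mathcal{M}(\R^n\times S^{n-1})$ and constants $\alpha_1 < \alpha_2$ with $\Phi(\mu) \le \alpha_1$ for all $\mu \in C_1$ and $\Phi(\nu) \ge \alpha_2$ for all $\nu \in C_2$. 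By the dual characterization, $\Phi(\mu) = \langle \mu, h\rangle$ for some $h \in C_0(\R^n\times S^{n-1},\R)$, and hence $\sup_{\mu\in C_1}\langle\mu,h\rangle \le \alpha_1 < \alpha_2 \le \inf_{\nu\in C_2}\langle\nu,h\rangle$, which is exactly \eqref{eq:def_separated}.

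The main point requiring care --- and the one I would expect to be the only genuine obstacle --- is verifying that the hypotheses of the \emph{strict} separation theorem are met in this infinite-dimensional setting: one needs a locally convex space, disjoint nonempty convex sets, one compact and one closed. Compactness of both $C_i$ is given; local convexity of the weak-* topology and the identification of its dual are standard but should be cited precisely so that the reader is not left to wonder whether the relevant functionals are test functions rather than arbitrary elements of some larger bidual. A secondary subtlety is that weak-* compact sets need not be bounded \emph{a priori} in an arbitrary dual; here, however, weak-* compactness already suffices for the separation argument, so boundedness need not be separately invoked. I would also remark that the conclusion can be rephrased with a bias term $\beta$ chosen in $(-\alpha_2, -\alpha_1)$ to match the SVM-style formulation mentioned right after Definition~3, but that is a cosmetic addendum rather than part of the proof.
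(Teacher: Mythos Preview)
Your proposal is correct and follows essentially the same approach as the paper: apply the geometric Hahn--Banach separation theorem to the disjoint convex weak-* compact sets $C_1,C_2$ in the locally convex space $\mathcal{M}(\R^n\times S^{n-1})$ with its weak-* topology, and then identify the resulting weak-* continuous linear functional with $\mu\mapsto\langle\mu,h\rangle$ for some $h\in C_0(\R^n\times S^{n-1})$. The paper's proof is terser and cites the same reference (Brezis, Theorem~1.7); your added commentary on which hypotheses need checking is accurate but not strictly necessary.
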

\begin{proof}
Let $C_1, C_2 \subseteq \mathcal{M}(\R^n\times S^{n-1})$ be convex weak-* compact subsets such that $C_1\cap C_2 = \emptyset$. Since the space $\mathcal{M}(\R^n\times S^{n-1})$ with the weak-* topology is a locally convex space, the Hahn-Banach separation theorem (Theorem 1.7 in \cite{brezis2011functional}) applies and there exists a weak-* continuous linear form $\varphi: \mathcal{V} \rightarrow \R$ such that: 
\begin{equation*}
    \sup_{\mu\in C_1} \varphi(\mu)< \inf_{\nu\in C_2} \varphi(\nu).
\end{equation*}
As already noted above, $\varphi$ must be of the form $\varphi(\mu) = \langle \mu, h \rangle$ for some test function $h \in C_0(\R^n\times S^{n-1},\R)$, which leads to the result.
\end{proof}
The weak-* compactness assumption is relatively natural as, by the Banach-Alaoglu theorem, closed sets of varifolds with bounded total variation are in particular weak-* compact. However, the condition of convexity in $\mathcal{V}$ in Theorem \ref{thm:separating_convex} is rather abstract and not immediate to connect to concrete situations of interest. We will thus look to derive more specific conditions for certain families of measure sets $C_1$ and $C_2$. First of all, we can focus on the situation where $C_1$ and $C_2$ are both finite subsets of $\mathcal{V}$, which is natural when considering classification and regression problems in practice. In that case, $C_1$ and $C_2$ are automatically weak-* compact. We can then state the following simpler condition:
\begin{thm}\label{thm:separating_support}
    Let $C_1, C_2 \subseteq \mathcal{V}$ be disjoint finite sets of varifolds which satisfy the property that either 1) for each $\nu \in C_2$, $\text{supp}(\nu)\not\subseteq \bigcup_{\mu\in C_1} \text{supp}(\mu)$ or 2) for each $\mu \in C_1$, $\text{supp}(\mu)\not\subseteq \bigcup_{\nu\in C_2} \text{supp}(\nu)$. Then there exists $h\in C_0(\R^n\times S^{n-1},\R)$ that separates $C_1$ and $C_2$.
\end{thm}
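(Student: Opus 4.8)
The plan is a direct construction of a separating test function, exploiting the defining property of supports. First I would reduce to a single case: conditions 1) and 2) are symmetric under exchanging the roles of $C_1$ and $C_2$ together with replacing $h$ by $-h$, so it suffices to treat case 1), namely that for each $\nu\in C_2$ the support $\text{supp}(\nu)$ is not contained in $K\doteq\bigcup_{\mu\in C_1}\text{supp}(\mu)$. Indeed, if in case 2) we produce $\tilde h\geq 0$ in $C_0(\R^n\times S^{n-1})$ vanishing on $\bigcup_{\nu\in C_2}\text{supp}(\nu)$ and with $\langle\mu,\tilde h\rangle>0$ for every $\mu\in C_1$, then $h=-\tilde h$ gives $\sup_{\mu\in C_1}\langle\mu,h\rangle<0=\inf_{\nu\in C_2}\langle\nu,h\rangle$. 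Observe also that $K$ is closed, being a finite union of supports, which are themselves closed.

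Next, for each $\nu\in C_2$ I would fix a witness point $(x_\nu,v_\nu)\in\text{supp}(\nu)\setminus K$; since $K$ is closed and omits $(x_\nu,v_\nu)$, we have $\operatorname{dist}((x_\nu,v_\nu),K)>0$ for any fixed metric on $\R^n\times S^{n-1}$. I then set $h(x,v)=\chi(x,v)\,\operatorname{dist}((x,v),K)$, where $\chi$ is a continuous, compactly supported cutoff equal to $1$ on a ball containing all of the (finitely many) witness points. Then $h\in C_c(\R^n\times S^{n-1})\subseteq C_0(\R^n\times S^{n-1})$, $h\geq 0$, $h$ vanishes identically on $K$ because $\operatorname{dist}(\cdot,K)$ does, and $h(x_\nu,v_\nu)=\operatorname{dist}((x_\nu,v_\nu),K)>0$ for each $\nu\in C_2$. (Equivalently one may take $h$ to be a finite sum of nonnegative bump functions, one supported in a small ball around each $(x_\nu,v_\nu)$ chosen disjoint from $K$.)

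Finally I would verify the separation inequality \eqref{eq:def_separated}. For any $\mu\in C_1$ we have $\text{supp}(\mu)\subseteq K$ and $h\equiv 0$ on $K$, hence $\langle\mu,h\rangle=\int h\,\mathrm d\mu=0$, so $\sup_{\mu\in C_1}\langle\mu,h\rangle=0$. For $\nu\in C_2$, write $c_\nu=h(x_\nu,v_\nu)>0$; by continuity $h>c_\nu/2$ on an open neighborhood $W_\nu$ of $(x_\nu,v_\nu)$, and since $(x_\nu,v_\nu)\in\text{supp}(\nu)$ the defining property of the support gives $\nu(W_\nu)>0$, whence $\langle\nu,h\rangle\geq\int_{W_\nu}h\,\mathrm d\nu\geq \tfrac{c_\nu}{2}\,\nu(W_\nu)>0$. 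Because $C_2$ is finite, $\inf_{\nu\in C_2}\langle\nu,h\rangle=\min_{\nu\in C_2}\langle\nu,h\rangle>0$, and therefore $\sup_{\mu\in C_1}\langle\mu,h\rangle=0<\inf_{\nu\in C_2}\langle\nu,h\rangle$, so $h$ separates $C_1$ and $C_2$.

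The construction is elementary, and I expect no serious obstacle; the one point deserving genuine care — and the reason the support hypothesis is exactly the right one — is the step establishing $\langle\nu,h\rangle>0$, which rests essentially on the fact that every open neighborhood of a support point carries strictly positive measure. Secondary bookkeeping points are the reduction between cases 1) and 2) with the accompanying sign change, and ensuring the cutoff $\chi$ is present so that $h$ truly vanishes at infinity (since $\operatorname{dist}(\cdot,K)$ alone does not).
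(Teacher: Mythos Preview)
Your proof is correct and takes a genuinely different route from the paper's. The paper argues by contradiction that the convex hulls $\text{conv}(C_1)$ and $\text{conv}(C_2)$ are disjoint in $\mathcal{V}$---using positivity of the measures to show that $\operatorname{supp}\big(\sum_j \beta_j \nu_j\big)=\bigcup_{j:\beta_j\neq 0}\operatorname{supp}(\nu_j)$, which forces some $\nu_j$ to violate the support hypothesis---and then invokes Theorem~\ref{thm:separating_convex} (Hahn--Banach separation of weak-$*$ compact convex sets) to obtain $h$. Your argument is instead direct and constructive: you exhibit $h$ explicitly as a nonnegative $C_0$ function vanishing on $K=\bigcup_{\mu\in C_1}\operatorname{supp}(\mu)$ but strictly positive at a witness point in each $\operatorname{supp}(\nu)\setminus K$, and conclude from the defining property of supports together with positivity of the measures. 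This is more elementary---no Hahn--Banach, no weak-$*$ compactness---and yields an explicit separator; the paper's route, in exchange, delivers the extra structural fact that the convex hulls themselves are disjoint and slots naturally into the general framework of Theorem~\ref{thm:separating_convex}. One minor edge case: your formula $h=\chi\cdot\operatorname{dist}(\cdot,K)$ tacitly assumes $K\neq\emptyset$; the alternative bump-function construction you mention handles this case without difficulty.
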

\begin{proof}
Without loss of generality, let us assume that $C_1, C_2 \subseteq \mathcal{S}$ are two finite sets in $\mathcal{V}$ with $C_1=\{\mu_1,\ldots,\mu_p\}$ and $C_2=\{\nu_1,\ldots,\nu_q\}$, such that for each $\nu \in C_2$, $\text{supp}(\nu)\not\subseteq \bigcup_{\mu\in C_1} \text{supp}(\mu)$. Let us denote by $\text{conv}(C_1)$ and $\text{conv}(C_2)$ the convex hulls of the two sets $C_1$ and $C_2$ in $\mathcal{V}$. By contradiction, if $\text{conv}(C_1) \cap \text{conv}(C_2) \neq \emptyset$, there exist $(\alpha_1,\ldots,\alpha_p)$ and $(\beta_1,\ldots,\beta_q)$ in the probability simplexes of $\R^p_+$ and $\R_+^q$ respectively such that:   
\begin{equation*}
    \sum_{i=1}^p \alpha_i \mu_i = \sum_{j=1}^{q} \beta_j \nu_j \ \implies \ \operatorname{supp}\left(\sum_{j=1}^{q} \beta_j \nu_j \right) = \operatorname{supp}\left(\sum_{i=1}^{p} \alpha_i \mu_i \right)
\end{equation*}
Now, on the one hand, $\operatorname{supp}\left(\sum_{i=1}^{p} \alpha_i \mu_i \right) \subseteq \bigcup_{\mu \in C_1} \operatorname{supp}(\mu)$. On the other hand, we have:
\begin{equation*}
    \operatorname{supp}\left(\sum_{j=1}^{q} \beta_j \nu_j \right) = \bigcup_{j:\beta_j \neq 0} \operatorname{supp}(\nu_j).
\end{equation*}
Since not all $\beta_j$'s can be $0$, we deduce that there exist $j$ such that $\operatorname{supp}(\nu_j) \subseteq \bigcup_{\mu \in C_1} \operatorname{supp}(\mu)$ which contradicts the assumption. Therefore, $\text{conv}(C_1)$ and $\text{conv}(C_2)$ are two disjoint convex sets in $\mathcal{V}$. They are also weak-* compact as the convex hulls of finite sets; indeed $\text{conv}(C_1)$ is the image of the compact probability simplex by the continuous map $(\alpha_1,\ldots,\alpha_p) \mapsto \sum_{i=1}^p \alpha_i \mu_i$, and similarly for $\text{conv}(C_2)$. Then, by Theorem \ref{thm:separating_convex}, we deduce that $\text{conv}(C_1)$ and $\text{conv}(C_2)$ and thus $C_1$ and $C_2$ can be separated by a test function $h \in C_0(\R^n\times S^{n-1})$.
\end{proof}
\begin{remark}
    Note that this only provides a sufficient but not necessary condition of separability. For instance, if $C_1$ and $C_2$ are both made of varifolds all having the same compact support set $K\subseteq \R^n \times S^{n-1}$ but such that their total mass satisfy $\mu(K)<\nu(K)$ for all $\mu \in C_1$ and $\nu \in C_2$, then $C_1$ and $C_2$ are still separable by taking any test function $h$ that is constant equal to $1$ on $K$.
\end{remark}
The above condition on the support of the varifolds becomes even more explicit when considering varifolds that are associated to continuous shapes via the mapping $q \in \mathcal{I} \mapsto \mu_q$ introduced in Section \ref{ssec:varifolds}. We get specifically:
\begin{corollary}\label{cor:separating_shapes}
    Let $\Sigma_1, \Sigma_2$ be finite sets of continuous shapes in $\mathcal{S}$ such that either 1) for each $q' \in \Sigma_2$, $q'(M)\not\subseteq \bigcup_{q\in \Sigma_1} q(M)$ or 2) for each $q \in \Sigma_1$, $q(M)\not\subseteq \bigcup_{q'\in \Sigma_2} q'(M)$. Then the two sets $\Sigma_1, \Sigma_2$ are strictly separable by a test function $h \in C_0(\R^n\times S^{n-1},\R)$.
\end{corollary}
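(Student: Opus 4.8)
\textit{Proof strategy.}
The plan is to reduce the statement directly to Theorem~\ref{thm:separating_support} applied to the finite varifold sets $C_1 = \{\mu_q : q \in \Sigma_1\}$ and $C_2 = \{\mu_{q'} : q' \in \Sigma_2\}$, the only genuinely geometric input being the relation between the support of $\mu_q$ and the image set $q(M)$.

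First I would establish the two inclusions describing $\operatorname{supp}(\mu_q)$ in terms of $q(M)$. On the one hand, since $\mu_q = (q,v_q)_*\vol_q$ is concentrated on $q(M)\times S^{n-1}$, and $q(M)$ is closed ($M$ being compact and $q$ continuous), one gets $\operatorname{supp}(\mu_q)\subseteq q(M)\times S^{n-1}$. On the other hand, for any $x_0\in q(M)$ and any $\epsilon>0$ the preimage $q^{-1}(B(x_0,\epsilon))$ is a nonempty open subset of $M$; since $q$ is a bi-Lipschitz embedding of a $d$-manifold, its image has positive $\mathcal{H}^d$-measure, so $\mu_q(B(x_0,\epsilon)\times S^{n-1}) = \vol_q(q^{-1}(B(x_0,\epsilon)))>0$. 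Letting $\epsilon\to 0$ and using compactness of $S^{n-1}$ together with closedness of the support, one obtains a direction $v_0$ with $(x_0,v_0)\in\operatorname{supp}(\mu_q)$. In particular the projection of $\operatorname{supp}(\mu_q)$ onto $\R^n$ is exactly $q(M)$.

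Next I would translate the hypothesis, say condition 1) (condition 2) being symmetric). Fix $q'\in\Sigma_2$ and choose $x_0\in q'(M)\setminus\bigcup_{q\in\Sigma_1}q(M)$, nonempty by assumption. By the previous paragraph there is $v_0\in S^{n-1}$ with $(x_0,v_0)\in\operatorname{supp}(\mu_{q'})$, while for every $q\in\Sigma_1$ we have $\operatorname{supp}(\mu_q)\subseteq q(M)\times S^{n-1}$ and $x_0\notin q(M)$, hence $(x_0,v_0)\notin\bigcup_{q\in\Sigma_1}\operatorname{supp}(\mu_q)$. Thus $\operatorname{supp}(\mu_{q'})\not\subseteq\bigcup_{q\in\Sigma_1}\operatorname{supp}(\mu_q)$, i.e. $C_1,C_2$ satisfy hypothesis 1) of Theorem~\ref{thm:separating_support}. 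They are also disjoint: if $\mu_{q'}=\mu_q$ for some $q\in\Sigma_1$, $q'\in\Sigma_2$, then injectivity of the varifold map (Theorem~\ref{prop:varifold_invariance}) forces $q'(M)=q(M)$, contradicting the choice of $x_0$. Theorem~\ref{thm:separating_support} then yields $h\in C_0(\R^n\times S^{n-1},\R)$ separating $C_1$ and $C_2$ in the sense of \eqref{eq:def_separated}, which is by definition exactly the asserted strict separation of $\Sigma_1$ and $\Sigma_2$.

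The one delicate point is the reverse inclusion for the support: since $q$ is merely Lipschitz, the tangent/normal field $v_q$ is only defined almost everywhere, so $\operatorname{supp}(\mu_{q'})$ need not literally contain $(x_0,v_{q'}(x_0))$ for the chosen $x_0$. This is precisely why the argument above extracts only the existence of \emph{some} admissible direction $v_0$ over $x_0$ via a limiting/compactness argument, which is all that is needed. Everything else is a routine combination of the pushforward definition of $\mu_q$, elementary properties of supports of Radon measures, and the already-established Theorems~\ref{prop:varifold_invariance} and~\ref{thm:separating_support}.
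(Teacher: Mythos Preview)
Your proposal is correct and follows the same route as the paper: reduce to Theorem~\ref{thm:separating_support} via the relation between $\operatorname{supp}(\mu_q)$ and $q(M)$. If anything you are more careful than the paper, which simply asserts $\operatorname{supp}(\mu_q)=\{(q(m),v_q(m)):m\in M\}$ without addressing the a.e.\ issue for $v_q$ or checking the disjointness of $C_1,C_2$ that you explicitly verify.
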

This indeed follows directly from Theorem \ref{thm:separating_support} since for $q \in \mathcal{I}$, $\operatorname{supp}(\mu_q) = \{(q(m),v_q(m)): \ m \in M\}$ and thus the assumption of Corollary \ref{cor:separating_shapes} implies the assumption of Theorem \ref{thm:separating_support}. 
\begin{remark}
When the condition of Corollary \ref{cor:separating_shapes} is not verified, it is possible to find sets that are not separable within the SVarM setting. Figure \ref{fig:counterex_sep} shows such an example in which the two sets (green and red), each made of two shapes, cannot be separated. This is because the two convex hulls of the varifolds associated to the shapes in those sets do intersect in $\mathcal{V}$; specifically, one can see that the varifold $\frac{1}{2}\mu_{q_1} + \frac{1}{2} \mu_{q_2}$, $q_1$ and $q_2$ being the two shapes of the first (green) set, is equal to $\frac{1}{2}\mu_{q'_1} + \frac{1}{2} \mu_{q'_2}$ where $q'_1$ and $q'_2$ are the two shapes in the second (red) set, and therefore belongs to the intersection of the two convex hulls. This example further shows that the weaker and somewhat more natural condition that the support of any varifold in one of the two sets is not included in the support of any varifold from the other set is not sufficient to guarantee separability by SVarM since the example of Figure \ref{fig:counterex_sep} is not separable and does satisfy this condition (but does not satisfy that of Corollary \ref{cor:separating_shapes}).  
\end{remark}
\begin{figure}
    \centering    
    \includegraphics[height=.45\linewidth]{ 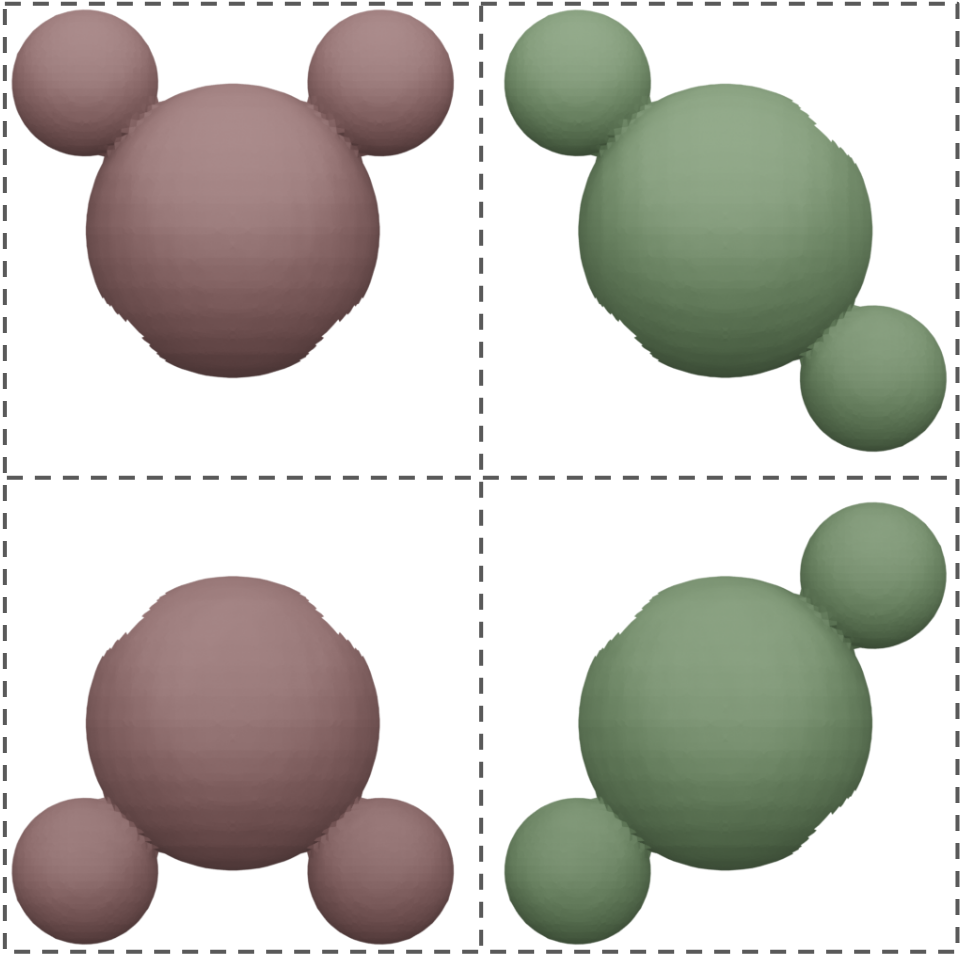}
    \caption{An example of two sets of shapes (in green and red) that cannot be separated within the SVarM framework. Note also that neither shape in one set is a subset of a shape from the other set, showing that this is not a sufficient condition of separability for SVarM.}\label{fig:counterex_sep}
\end{figure}
One can also obtain a corresponding result for discrete varifolds such as varifolds approximating discrete meshes. It is formulated below and is again a direct corollary of Theorem \ref{thm:separating_support}:
\begin{corollary}\label{cor:separating_discrete_var}
    Let $C_1$ and $C_2$ be two finite sets of discrete varifolds that are given by $C_1=\left\{\sum_{i=1}^{N_k}r_i^{(k)} \delta_{(x_i^{(k)},v_i^{(k)})} \right\}_{k=1,\ldots,K_1}$ and $C_2=\left\{\sum_{i=1}^{\tilde{N}_k}\tilde{r}_i^{(k)} \delta_{(\tilde{x}_i^{(k)},\tilde{v}_i^{(k)})} \right\}_{k=1,\ldots,K_2}$. Assume that these sets satisfy either of the two following conditions:
    \begin{enumerate}
        \item For each $k=1,\ldots,K_1$, $\{(x_i^{(k)},v_i^{(k)}), i=1,\ldots N_k\} \not \subseteq \bigcup_{l=1}^{K_2}\{(\tilde{x}_i^{(l)},\tilde{v}_i^{(l)}), i=1,\ldots \tilde{N}_l\}$
        \item For each $k=1,\ldots,K_2$, $\{(\tilde{x}_i^{(k)},\tilde{v}_i^{(k)}), i=1,\ldots \tilde{N}_k\} \not \subseteq \bigcup_{l=1}^{K_1}\{(x_i^{(l)},v_i^{(l)}), i=1,\ldots N_l\}$
    \end{enumerate}
    Then the two sets $C_1, C_2$ are separable by a test function $h \in C_0(\R^n\times S^{n-1},\R)$.
\end{corollary}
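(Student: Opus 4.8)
The plan is to deduce this directly from Theorem \ref{thm:separating_support} by identifying the support of a discrete varifold with its finite set of base points. First I would recall that for a discrete varifold $\mu = \sum_{i=1}^{N} r_i \delta_{(x_i,v_i)}$ with strictly positive weights $r_i > 0$ (which is part of the construction of the discrete varifolds considered here, following Section \ref{ssec:varifolds}), one has $\mu \in \mathcal{V}$ and $\text{supp}(\mu) = \{(x_i,v_i): i = 1,\ldots,N\}$. Indeed, this set is finite, hence closed; every open neighborhood of a base point $(x_i,v_i)$ has $\mu$-mass at least $r_i > 0$, so each $(x_i,v_i)$ belongs to the support, while the complement of the finite point set is open and $\mu$-null.

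Next I would substitute this characterization into the hypotheses of Theorem \ref{thm:separating_support}. Writing $\mu_k = \sum_{i=1}^{N_k} r_i^{(k)} \delta_{(x_i^{(k)},v_i^{(k)})}$ for the elements of $C_1$ and $\nu_l = \sum_{i=1}^{\tilde N_l} \tilde r_i^{(l)} \delta_{(\tilde x_i^{(l)},\tilde v_i^{(l)})}$ for those of $C_2$, the identity above gives $\text{supp}(\mu_k) = \{(x_i^{(k)},v_i^{(k)}): i=1,\ldots,N_k\}$ and $\bigcup_{l=1}^{K_2}\text{supp}(\nu_l) = \bigcup_{l=1}^{K_2}\{(\tilde x_i^{(l)},\tilde v_i^{(l)}): i=1,\ldots,\tilde N_l\}$, and symmetrically with the roles reversed. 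Hence condition (1) of the corollary is precisely condition 1) of Theorem \ref{thm:separating_support}, and condition (2) here is precisely condition 2) there. Moreover $C_1$ and $C_2$ are finite sets of varifolds, and either support condition forces every $\mu \in C_1$ to differ from every $\nu \in C_2$ (otherwise a common element would trivially have its support contained in the union of supports of its own set), so $C_1 \cap C_2 = \emptyset$ and the remaining hypothesis of Theorem \ref{thm:separating_support} holds automatically. Applying that theorem then produces a test function $h \in C_0(\R^n\times S^{n-1},\R)$ separating $C_1$ and $C_2$, which is the claim.

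There is essentially no serious obstacle in this argument: the only point requiring a line of care is the support-equals-base-point-set identity, and in particular the standing convention that the Dirac weights are strictly positive. If a weight were allowed to vanish, the corresponding base point would drop out of the support and the stated combinatorial conditions would need to be rephrased in terms of the points carrying nonzero mass; under the positivity convention used throughout, the reduction to Theorem \ref{thm:separating_support} is immediate.
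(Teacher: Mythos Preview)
Your argument is correct and matches the paper's approach exactly: the paper states this result as ``again a direct corollary of Theorem \ref{thm:separating_support}'' without further proof, and your reduction via $\text{supp}\big(\sum_i r_i \delta_{(x_i,v_i)}\big) = \{(x_i,v_i)\}$ is precisely what is intended. One small bookkeeping slip: condition (1) of the corollary (quantifying over $k=1,\ldots,K_1$, i.e.\ over elements of $C_1$) actually matches condition 2) of Theorem \ref{thm:separating_support}, and condition (2) matches condition 1); since the theorem only requires \emph{either} condition, this swap is harmless, but you should correct the cross-reference.
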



\subsubsection{Stability with Respect to Shape Deformations and Perturbations}
Another key theoretical question regarding the SVarM approach is the stability of the model under perturbations of the input data. We show below that this type of result can hold under the assumption that $h$ is Lipchitz continuous. As described in Section \ref{sec:netw_architecture}, we will approximate $h$ using a multilayer perceptron (MLP) architecture. For fixed weights (i.e., after training), an MLP is a Lipchitz function, and one can efficiently compute an upper bound on its Lipchitz constant \cite{NEURIPS2018_d54e99a6}. Let us first examine stability for general varifolds in $\mathcal{V}$:
\begin{thm}\label{thm:stabilityW1}
    Let $\mu,\nu\in\mathcal{V}$ and $h\in C_0(\R^n\times S^{n-1})$ such that $h$ is Lipchitz with constant $K$. Then, 
    \begin{equation*}
       |\langle\mu,h\rangle - \langle\nu,h\rangle|\leq\|h\|_\infty\cdot \|\mu-\nu\|_{\text{TV}}
    \end{equation*}
    and
    \begin{equation*}
        |\langle\mu,h\rangle - \langle\nu,h\rangle|\leq \left|\|\mu\|_{\text{TV}}-\|\nu\|_{\text{TV}}\right|\cdot\|h\|_\infty + \|\mu\|_{\text{TV}}\cdot K\cdot W_1\left(\frac{\mu}{m_\mu},\frac{\nu}{m_\nu}\right)
    \end{equation*}
    where $W_1$ denotes the 1-Wasserstein distance. 
\end{thm}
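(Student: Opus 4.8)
The plan is to prove the two inequalities separately, as they rely on different tools. The first one is immediate: since $h$ is bounded, for any signed measure $\mu - \nu$ we have
\begin{equation*}
  |\langle \mu, h\rangle - \langle \nu, h\rangle| = |\langle \mu - \nu, h\rangle| \leq \|h\|_\infty \cdot \|\mu - \nu\|_{\mathrm{TV}},
\end{equation*}
which is just the defining duality bound \eqref{eq:TV_norm_def} applied to $h/\|h\|_\infty$ (or directly the definition of the total variation norm as the dual norm). No real obstacle here.

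For the second inequality, the strategy is a mass-normalization / triangle-inequality decomposition. Write $m_\mu = \|\mu\|_{\mathrm{TV}} = \mu(\R^n\times S^{n-1})$ and $m_\nu = \|\nu\|_{\mathrm{TV}}$ (these are genuine total masses since varifolds are positive measures), and let $\bar\mu = \mu/m_\mu$, $\bar\nu = \nu/m_\nu$ be the associated probability measures. First I would split
\begin{equation*}
  \langle \mu, h\rangle - \langle \nu, h\rangle = \langle m_\mu \bar\mu - m_\nu \bar\mu, h\rangle + \langle m_\nu\bar\mu - m_\nu\bar\nu, h\rangle = (m_\mu - m_\nu)\langle \bar\mu, h\rangle + m_\nu\langle \bar\mu - \bar\nu, h\rangle.
\end{equation*}
The first term is bounded in absolute value by $|m_\mu - m_\nu|\cdot\|h\|_\infty$ since $\bar\mu$ is a probability measure. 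For the second term, I would invoke the Kantorovich–Rubinstein duality: $|\langle \bar\mu - \bar\nu, h\rangle| \leq \operatorname{Lip}(h)\cdot W_1(\bar\mu,\bar\nu) \leq K\cdot W_1(\bar\mu,\bar\nu)$, using that $h$ is $K$-Lipschitz on $\R^n\times S^{n-1}$ (with its natural product metric, e.g. Euclidean on $\R^n$ times geodesic or chordal on $S^{n-1}$). Combining gives $|\langle\mu,h\rangle - \langle\nu,h\rangle| \leq |m_\mu - m_\nu|\cdot\|h\|_\infty + m_\nu\cdot K\cdot W_1(\bar\mu,\bar\nu)$.

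This is almost the claimed bound but with $m_\nu$ in place of $\|\mu\|_{\mathrm{TV}} = m_\mu$ in front of the Wasserstein term; a symmetric decomposition (pulling out $m_\mu\bar\nu$ instead) yields the version with $m_\mu$, so one simply chooses that split to match the stated inequality. The one point requiring a little care — and the main (mild) obstacle — is that Kantorovich–Rubinstein duality characterizes $W_1$ as a supremum over $1$-Lipschitz functions, so to apply it to a $K$-Lipschitz $h$ one writes $h/K$ as a competitor and notes that $h$ need not vanish at infinity is harmless here because $\bar\mu - \bar\nu$ has total mass zero (so adding a constant to $h$ does not change the bracket), and on the support of compactly-supported varifolds one may truncate $h$ without affecting the estimate; alternatively one restricts attention to the case $W_1(\bar\mu,\bar\nu)<\infty$, the inequality being vacuous otherwise. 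I would also remark that if either $m_\mu$ or $m_\nu$ is zero the statement degenerates and the first inequality already covers it. Modulo these standard caveats, the proof is a two-line duality bound plus a one-line mass-split.
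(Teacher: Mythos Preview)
Your proof is correct and essentially identical to the paper's: both use the trivial duality bound for the TV inequality, and for the second bound both normalize $\mu=m_\mu\bar\mu$, $\nu=m_\nu\bar\nu$, perform the add-and-subtract split to isolate a mass-difference term bounded by $\|h\|_\infty$ and a normalized-difference term handled via Kantorovich--Rubinstein applied to $h/K$. The only cosmetic difference is that the paper chooses the split yielding $m_\mu$ in front of $W_1$ directly (inserting $m_\mu\langle\bar\nu,h\rangle$ rather than $m_\nu\langle\bar\mu,h\rangle$), which you arrive at via your ``symmetric decomposition'' remark.
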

\begin{proof}
Let $\mu,\nu\in\mathcal{V}$ and $h\in C_0(\R^n\times S^{n-1})$ such that $h$ is Lipchitz with constant $K$. 

\textbf{Bound 1:} Based on the definition of the TV norm on $\mathcal{M}(\R^n \times S^{n-1})$ given above, we can bound the difference in the evaluation of $\mu$ and $\nu$ on $h\neq 0$ as follows:
\begin{align*}
    |\langle\mu,h\rangle-\langle\nu,h\rangle|=&\left|\int_{\R^n\times S^{n-1}} h\operatorname{d}\mu-\int_{\R^n\times S^{n-1}} h\operatorname{d}\nu\right|\\
    &\left|\int_{\R^n\times S^{n-1}} h\operatorname{d}(\mu-\nu)\right|\\
    &\left|\int_{\R^n\times S^{n-1}} \frac{h}{\|h\|_\infty}\operatorname{d}(\mu-\nu)\right| \|h\|_{\infty} \\
    &\leq \|\mu-\nu\|_{\text{TV}} \|h\|_{\infty}. 
\end{align*}
\textbf{Bound 2:} For brevity, let us define the total masses $m_\mu=\|\mu\|_{\text{TV}}, m_\nu=\|\nu\|_{\text{TV}}$ and the corresponding normalized measures $\overline{\mu} = \frac{\mu}{m_\mu},\overline{\nu} = \frac{\nu}{m_\nu}$. We begin by with a series of inequalities:
\begin{align*}
    |\langle\mu,h\rangle-\langle\nu,h\rangle|=& \left|m_\mu\left\langle\overline{\mu},h\right\rangle-m_\nu\left\langle\overline{\nu},h\right\rangle\right|\\ =& \left|m_\mu\left\langle\overline{\mu},h\right\rangle-m_\mu\left\langle\overline{\nu},h\right\rangle+m_\mu\left\langle\overline{\nu},h\right\rangle-m_\nu\left\langle\overline{\nu},h\right\rangle\right|\\ 
    \leq&\left|m_\mu\left\langle\overline{\nu},h\right\rangle-m_\nu\left\langle\overline{\nu},h\right\rangle\right|+\left|m_\mu\left\langle\overline{\mu},h\right\rangle-m_\mu\left\langle\overline{\nu},h\right\rangle\right|\\
    \leq& |m_\mu-m_\nu|\cdot\left|\int_{\R^n\times S^{n-1}}h \operatorname{d} \overline{\nu}\right|+m_\mu\left|\int_{\R^n\times S^{n-1}} h \operatorname{d} \overline{\mu} - \int_{\R^n\times S^{n-1}} h \operatorname{d} \overline{\nu}\right|\\
    \leq& |m_\mu-m_\nu|\cdot\|h\|_\infty+m_\mu\left|\int_{\R^n\times S^{n-1}} h \operatorname{d} \overline{\mu} - \int_{\R^n\times S^{n-1}} h \operatorname{d} \overline{\nu}\right|. 
\end{align*}
Next we define the function $\overline{h} := \frac{h}{K}$. Since the Lipchitz constant of $h$ is $K$, it follows that $\overline{h}$ is $1-$Lipchitz. Thus we can write
\begin{align*}
    \left|\int_{\R^n\times S^{n-1}} h \operatorname{d} \overline{\mu} - \int_{\R^n\times S^{n-1}} h \operatorname{d} \overline{\nu}\right| = &K\left|\int_{\R^n\times S^{n-1}} \overline{h} \operatorname{d} \overline{\mu} - \int_{\R^n\times S^{n-1}} \overline{h} \operatorname{d} \overline{\nu}\right|.
\end{align*}
By the Kantorovich-Rubinstein formulation of the 1-Wasserstein distance,
\begin{equation*}
    W_1(\overline{\mu},\overline{\nu}) = \sup_{\|f\|_{W^{1,\infty}}\leq 1}\left|\int_{\R^n\times S^{n-1}} f \operatorname{d} \overline{\mu} - \int_{\R^n\times S^{n-1}} f \operatorname{d} \overline{\nu}\right|\geq\left|\int_{\R^n\times S^{n-1}} \overline{h} \operatorname{d} \overline{\mu} - \int_{\R^n\times S^{n-1}} \overline{h} \operatorname{d} \overline{\nu}\right|
\end{equation*}
and we conclude that:
\begin{equation*}
    |\langle\mu,h\rangle-\langle\nu,h\rangle| \leq |m_\mu-m_\nu|\cdot\|h\|_\infty+m_\mu\cdot K\cdot W_1(\overline{\mu},\overline{\nu}).
\end{equation*}
\end{proof}
The first bound involving the total variation between $\mu$ and $\nu$ implies stability of $\langle \mu, h\rangle$ under partial occlusion of some parts of a shape (c.f. experiments of Section \ref{ssec:robustness}) while the second bound can be thought of as a control of the changes in $\langle \mu, h\rangle$ under variations of the total mass and support. Our second result demonstrates that the outputs of SVarM are stable in the space of shapes with respect to sufficiently regular perturbations in $\mathcal{S}$:
\begin{thm}\label{thm:stability_shape}
    Let $h\in C_0(\R^n\times S^{n-1})$ such that $h$ is Lipschitz with constant $K$ and $q,q'$ be bi-Lipschitz embeddings from a $d$-dimensional manifold $M$ into $\R^{n}$ (with either $d=1$ or $d=n-1$) such that $\|q - q'\|_{W^{1,\infty}} \leq C$. Then 
    \begin{equation}
        |\langle\mu_{q},h\rangle - \langle\mu_{q'},h\rangle| \leq CA_q(K\sqrt{2dL^{2d-1} + 1}+d^{d+1/2}L^{2d+1}\|h\|_{\infty})
    \end{equation}
    where $A_q = \int_M\operatorname{d vol}_q$ is the total length/area of $q(M)$, and $L$ the maximum of the Lipschitz constants of $q$ and $q'$.
\end{thm}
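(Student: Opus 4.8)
The plan is to compare $\langle\mu_q,h\rangle$ and $\langle\mu_{q'},h\rangle$ by writing both as integrals over the common parameter manifold $M$ and controlling the integrands pointwise. By the definition of the varifold map (the identity underlying \eqref{eq:var_rect_set}), for any $h\in C_0(\R^n\times S^{n-1})$,
\begin{equation*}
\langle\mu_q,h\rangle=\int_M h\big(q(m),v_q(m)\big)\,J_q(m)\,dm,\qquad J_q:=\sqrt{\det\big((dq)^\top dq\big)},
\end{equation*}
with the analogous formula for $q'$; here $dq,dq',J_q,J_{q'},v_q,v_{q'}$ exist for a.e.\ $m$ by Rademacher's theorem, and $\operatorname{d vol}_q=J_q\,dm$ so that $A_q=\int_M J_q\,dm$. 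Bi-Lipschitzness with constant $L$ forces the singular values of $dq(m),dq'(m)$ into $[L^{-1},L]$ a.e., whence $\|dq(m)\|_{\mathrm{op}}\le L$, each column of $dq(m)$ has norm at most $L$, and $L^{-d}\le J_q(m),J_{q'}(m)\le L^{d}$; the $W^{1,\infty}$ bound gives $\|q-q'\|_{L^\infty}\le C$ and $\|dq-dq'\|_{L^\infty}\le C$. I would then telescope the integrand difference as
\begin{equation*}
h(q,v_q)J_q-h(q',v_{q'})J_{q'}=\big[h(q,v_q)-h(q',v_{q'})\big]J_q+h(q',v_{q'})\big[J_q-J_{q'}\big]
\end{equation*}
and estimate the two resulting integrals separately.

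For the first integral, Lipschitz continuity of $h$ (for the product Euclidean metric on $\R^n\times S^{n-1}$) gives $|h(q,v_q)-h(q',v_{q'})|\le K\sqrt{\|q-q'\|^2+\|v_q-v_{q'}\|^2}$, and the position term is $\le C$. For the direction term I would use that $v_q=\Phi(dq)$, where $\Phi$ normalizes the generalized cross product $P(\cdot)$ of the columns of its argument (for $d=1$, $P$ is simply the single column, giving the unit tangent; for $d=n-1$, $P$ is the normal direction), $P$ being multilinear of degree $d$ with $\|P(w_1,\dots,w_d)\|\le\prod_i\|w_i\|$ and $\|P(dq)\|=J_q$. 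Combining the estimate $\|P(dq)-P(dq')\|\le dL^{d-1}\|dq-dq'\|$ with $J_q,J_{q'}\ge L^{-d}$ and the elementary bound $\|\widehat a-\widehat b\|^2\le\|a-b\|^2/(\|a\|\,\|b\|)$ for unit normalizations $\widehat a=a/\|a\|$, $\widehat b=b/\|b\|$ yields a pointwise estimate $\|v_q-v_{q'}\|\le\mathrm{const}(d,L)\,C$; with the constants arranged as in the statement this reads $\|v_q-v_{q'}\|^2\le 2dL^{2d-1}C^2$. Then $\sqrt{\|q-q'\|^2+\|v_q-v_{q'}\|^2}\le C\sqrt{2dL^{2d-1}+1}$, and since this is now a constant one pulls it out and integrates $J_q$, obtaining the bound $KC\sqrt{2dL^{2d-1}+1}\,A_q$ for the first integral.

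For the second integral, $|h(q',v_{q'})|\le\|h\|_\infty$ reduces matters to bounding $\int_M|J_q-J_{q'}|\,dm$. Here I would estimate the Lipschitz constant of $A\mapsto J(A)=\sqrt{\det(A^\top A)}$ over the set of matrices with singular values in $[L^{-1},L]$ — e.g.\ through $J^2=\det(A^\top A)$, the bound $L^2$ on the entries of $A^\top A$, the identity $A^\top A-B^\top B=A^\top(A-B)+(A-B)^\top B$, multilinearity of the $d\times d$ determinant, and $J_q+J_{q'}\ge 2L^{-d}$ — leading to a pointwise bound $|J_q-J_{q'}|\le\mathrm{const}(d)\,L\,C$. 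Writing $|J_q-J_{q'}|=\big(|J_q-J_{q'}|/J_q\big)J_q\le \mathrm{const}(d)\,L^{d+1}C\,J_q$ and integrating gives $\int_M|J_q-J_{q'}|\,dm\le d^{d+1/2}L^{2d+1}C\,A_q$ once the dimensional constant is made explicit. Adding the two contributions gives precisely the stated inequality. (Alternatively one could invoke the second bound of Theorem~\ref{thm:stabilityW1} with $\mu=\mu_q$, $\nu=\mu_{q'}$, but then one must separately estimate $|A_q-A_{q'}|$ and $W_1(\mu_q/A_q,\mu_{q'}/A_{q'})$ via an explicit coupling built from the common parametrization, which is no shorter.)

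The \textbf{main obstacle} is not the architecture of the argument but the quantitative bookkeeping in the last two paragraphs: producing explicit Lipschitz constants for the Gauss map $A\mapsto v(A)$ and the area Jacobian $A\mapsto J(A)$ on the admissible matrix set (singular values in $[L^{-1},L]$), and carefully tracking their dependence on $d$ and $L$, which is what generates the somewhat heavy factors $\sqrt{2dL^{2d-1}+1}$ and $d^{d+1/2}L^{2d+1}$. A secondary, routine point is that $q,q'$ are only Lipschitz, so all differentials and the formula $\operatorname{d vol}_q=J_q\,dm$ are valid only a.e.\ (Rademacher), which still suffices for the integral estimates; and the one-sided appearance of $A_q$ (rather than $A_{q'}$) is simply an artifact of the telescoping choice above.
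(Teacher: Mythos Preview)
Your approach is essentially identical to the paper's: both express the varifold pairings as integrals over $M$, telescope the integrand as $[h(q,v_q)-h(q',v_{q'})]J_q + h(q',v_{q'})[J_q-J_{q'}]$, bound the first piece via the Lipschitz constant of $h$ together with a multilinear/Hadamard estimate on the (normalized) cross product to control $\|v_q-v_{q'}\|$, and bound the second via $\|h\|_\infty$ and a determinant-difference estimate on the Gram matrices. The only cosmetic variation is your use of $\|\hat a-\hat b\|^2\le\|a-b\|^2/(\|a\|\,\|b\|)$ in place of the paper's $\|\hat a-\hat b\|\le 2\|a-b\|/\|a\|$, and the bookkeeping you flag as the main obstacle is exactly where the paper's effort lies as well.
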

\begin{proof}
    Let $h\in C_0(\R^n\times S^{n-1})$ such that $h$ is Lipschitz with constant $K$ and $q,q'$ be bi-Lipschitz embeddings from a $d$-dimensional manifold $M$ into $\R^{n}$ such that $\|q - q'\|_{W^{1,\infty}} \leq C$. We estimate the difference between the actions of the varifolds associated with $q$ and $q'$ on $h$ by:
\begin{align}    
    &|\langle\mu_q,h\rangle-\langle\mu_{q'},h\rangle|= \left|\int_{M} h(q,n_{q}) \operatorname{d} \operatorname{vol}_q -  \int_{M}h(q',n_{q'}) \operatorname{d}\operatorname{vol}_{q'}\right|\\
    =&\left|\int_{M} h(q,n_{q})\sqrt{|Jq|} - h(q',n_{q'})\sqrt{|Jq|}+ h(q',n_{q'})\sqrt{|Jq|} - h(q',n_{q'})\sqrt{|Jq'|} \operatorname{d}m\right|\\
    \leq&\int_{M} \left|h(q,n_q)-h(q',n_{q'})\right| \sqrt{|Jq|} \operatorname{d}m + \int_{M} |h(q,n_q)|\frac{\left|\sqrt{|Jq|}-\sqrt{|Jq'|}\right|}{\sqrt{|Jq|}} \sqrt{|Jq|}\operatorname{d}m.
\end{align}
where $Jq(m)=Dq(m)^T Dq(m)$ and $|Jq(m)|$ is the absolute value of the determinant of $Jq(m)$. Since $q,q'$ are bi-Lipchitz embeddings, there exists $L>0$ such that the singular values of $Dq$ and $Dq'$ are in $[1/L,L]$ almost everywhere on $M$. Therefore, for almost every $m\in M$, $\frac{1}{L^d}<\sqrt{|Jq|},\sqrt{|Jq'|}<L^d$ and $\frac{\sqrt{d}}{L}<\|Dq\|,\|Dq'\|<\sqrt{d}L$.
Moreover, for almost every $m\in M$, 
\begin{align*}
    \frac{\left|\sqrt{|Jq|}-\sqrt{|Jq'|}\right|}{\sqrt{|Jq|}} &\leq \frac{L^{2d}}{2}\left||Jq|-|Jq'|\right|.
\end{align*}
As follows from the Hadamard inequality on determinants, one has the following bounds on the difference of the Jacobians:
\begin{align*}
    \left||Jq|-|Jq'|\right|\leq& \|Jq-Jq'\| \frac{\|Jq\|^d-\|Jq'\|^d}{\|Jq\|-\|Jq'\|} \\
    \leq& d \|Jq-Jq'\| \max\left(\|Jq\|^{d-1},\|Jq'\|^{d-1}\right)\leq d \|Jq-Jq'\| (\sqrt{d}L)^{2(d-1)}\\
    =&d^d \|Jq-Jq'\| L^{2(d-1)}.
\end{align*} 
Moreover,
\begin{align*}
    \|Jq-Jq'\|\leq (\|Dq\|+\|Dq'\|)\|Dq-Dq'\|\leq 2\sqrt{d}LC.
\end{align*}
So,
\begin{equation*}
    \int_{M} |h(q,n_q)|\frac{\left|\sqrt{|Jq|}-\sqrt{|Jq'|}\right|}{\sqrt{|Jq|}} \sqrt{|Jq|}\operatorname{d}m\leq d^{d+1/2}CL^{2d+1}\int_M |h(q,n_q)|\sqrt{|Jq|}\operatorname{d}m \leq d^{d+1/2}CL^{2d+1}\|h\|_{\infty}A_q.
\end{equation*} 
where $A_q=\int_M \sqrt{|Jq|}\operatorname{d}m = \int_M \operatorname{d vol}_q$. Moreover, $n_q$ is the unit normal vector of $q$ computed by $n_q = \frac{\partial_1q\wedge\ldots\wedge\partial_dq}{\sqrt{|J_q|}}$. Therefore, for almost every $m\in M$, 
\begin{align*}
    \|n_q-n_{q'}\|&\leq 2L^d\|(\partial_1q\wedge...\partial_dq)-(\partial_1q'\wedge...\partial_dq')\|\leq2L^d\sum_{i=1}^d\|(\partial_1q\wedge...\wedge(\partial_iq-\partial_iq')\wedge...\wedge\partial_dq')\|\\
    &\leq2L^d\sum_{i=1}^d \|\partial_1q\|\cdot...\cdot\|\partial_iq-\partial_iq'\|\cdot...\cdot\|\partial_dq'\|\leq 2dL^{2d-1} \|Dq-Dq'\|\leq 2dL^dC.\\
\end{align*}
where we have again used Hadamard's inequality to bound the norm of the exterior product. So, $\sqrt{\|q-q'\|^2+\|n_q-n_{q'}\|^2}\leq  C\sqrt{2dL^{2d-1} + 1}$ almost everywhere.
Since $h$ is Lipschitz with constant $K$, we have for almost every $m \in M$,
\begin{equation*}
    |h(q, n_q) - h(q', n_{q'})| \leq K \sqrt{\|q-q'\|^2+\|n_q-n_{q'}\|^2} \leq CK\sqrt{2dL^{2d-1} + 1} .
\end{equation*} 
Therefore,
\begin{align*}
    \int_{M} \left|h(q,n_q)-h(q',n_{q'})\right| \sqrt{|Jq|} \operatorname{d}m \leq CK\sqrt{2dL^{2d-1} + 1}\int_M\sqrt{|Jq|}\operatorname{d}m = CKA_q\sqrt{2dL^{2d-1} + 1}
\end{align*}
Thus, we have obtained that:
\begin{equation*}
    |\langle\mu_q,h\rangle-\langle\mu_{q'},h\rangle|\leq CA_q(K\sqrt{2dL^{2d-1} + 1}+d^{d+1/2}L^{2d+1}\|h\|_{\infty}).
\end{equation*}
\end{proof}

\section{SVarM regression and classification models}
The core of the SVarM approach for regression or classification is to approximate a map $h:\R^n \times S^{n-1}\to \R$ (for $n=2$ or $3$) which, together with the bias $\beta$, represents any affine map on $\mathcal{V}$ as we have previously shown in Proposition \ref{prop:SVarM_affine_maps}. In the absence of any clear finite-dimensional family of functions to represent $h$, we focus here on non-parametric models; we first examine the situation in which $h$ is taken in some reproducing kernel Hilbert space of functions on $\R^n \times S^{n-1}$ leading to a formulation akin to kernel regression/classification, before relying instead on the representation of $h$ via a neural network model which we argue is a more effective approach when dealing with large datasets. From now on, we shall assume that a training set $\{\mu^{(k)},y^{(k)}\}_{k=1,\ldots,R}$ is given, where $\{\mu^{(1)},..., \mu^{(R)}\}$ is a set of varifolds (computed from the corresponding input shapes) and $\{y^{(1)}, ..., y^{(R)}\}$ are the corresponding labels (either real numbers in regression problems or integers giving class membership for classification).

\subsection{Reproducing kernel Hilbert space approach}
Reproducing kernel Hilbert spaces (RKHS) in the context of varifolds have been considered in several previous works \cite{charon2013varifold,kaltenmark2017general,hsieh2021metrics,paul2024sparse}, primarily as a way to build explicit metrics on the space of curves and surfaces. One can rely on a similar construction in order to specify the space of test functions $h$, in which case the problem is related to \emph{kernel ridge regression} with a theoretically explicit form for the solution. Before discussing this approach, we begin with a brief recap of the main background material on RKHS, referring the reader to e.g. \cite{Aronszajn1950} or \cite{younes2019shapes} Chap. 8 for more detailed presentations.

Let us start by introducing a positive definite kernel function $K$ on $\R^n \times S^{n-1}$, i.e. a function $K:(\R^n \times S^{n-1}) \times (\R^n \times S^{n-1}) \rightarrow \R$ wich is symmetric and satisfies the property that for any $(x_i,v_i)_{i=1,\ldots,m} \in (\R^n \times S^{n-1})^m$ and $(w_i) \in \R^m$, it holds that:
\begin{equation*}
 \sum_{i,j=1}^{m} w_i K((x_i,v_i),(x_j,v_j)) w_j \geq 0
\end{equation*}
with strict inequality as long as all points are distinct and $(w_i)_{i=1,\ldots,m} \neq 0$, in other words the matrix $[K((x_i,v_i),(x_j,v_j))]_{i,j}$ is a symmetric positive definite $m\times m$ matrix. Such kernel functions can in particular be constructed, as in \cite{charon2013varifold,kaltenmark2017general}, by taking the tensor product of a positive definite kernel on $\R^n$ and $S^{n-1}$, for which there are many known kernel families. If one makes the additional assumption that $K$ is a continuous function vanishing at infinity, it follows from the classical theory of kernels (c.f. \cite{younes2019shapes}) that there exists a unique RKHS $W$ of functions on $\R^n \times S^{n-1}$ such that $W$ is continuously embedded into $C_0(\R^n \times S^{n-1})$ which is ``associated'' to $K$. The previous statement means specifically that the Hilbert inner product on $W$ between any two functions $f(\cdot) = K((x,v), \cdot)$ and $g(\cdot) = K((x',v'), \cdot)$ is explicitly given by $\langle f,g \rangle_W = K((x,v),(x',v'))$. More general continuous embeddings of $W$ into spaces $C_0^s(\R^n \times S^{n-1})$ for $s \in \mathbb{N}$ can be also recovered under additional assumptions on the regularity of the kernel $K$.

Let us now consider the regression problem (the case of classification can be treated in a similar fashion). It can be framed, in this setting, similarly to a kernel ridge regression problem:
\begin{equation}
\label{eq:regression_kernel}
 \min_{h \in W, \beta \in \R} \, \lambda \|h\|_{W}^2 + \frac{1}{R} \sum_{k=1}^R \left( \langle \mu^{(k)},h\rangle + \beta - y^{(k)} \right)^2
\end{equation}
where $\lambda >0$ weighs the regularization term given by the RKHS norm $\|\cdot\|_{W}$. By direct differentiation with respect to $\beta$, one finds the linear equation satisfied by the optimal $(h_*,\beta_*)$:
\begin{equation}
\label{eq:regression_kernel1}
 \beta_* + \frac{1}{R}\sum_{k=1}^{R} \langle \mu^{(k)},h_*\rangle = \bar{y}.
\end{equation}
where $\bar{y} = \sum_{k=1}^{R} y^{(k)} /R$ is the average value of the labels. Conversely, we may derive the explicit optimality condition on $h$. We focus on the practical situation in which each $\mu^{(k)}$ is a discrete varifold that we write $\mu^{(k)} = \sum_{i=1}^{m_k} w_{i}^{(k)}\delta_{(x_{i}^{(k)},v_{i}^{(k)})}$. In that case, the second term of \eqref{eq:regression_kernel} only depends on the values of $h$ at the points $(x_i^{(k)},v_i^{(k)})$. In order to remove repetitions, we shall denote by $(\tilde{x}_i,\tilde{v}_i)$ for $i=1,\ldots,\tilde{m}$ (with $\tilde{m} \leq m \doteq m_1+\ldots+m_R$) the subset of distinct points in $\R^n \times S^{n-1}$ among the $(x_i^{(k)},v_i^{(k)})$'s. This implies that the optimal $h^*$ must lie in the orthogonal subspace $W_0^\bot$ where:
\begin{equation*}
 W_0 = \{h \in W: h(\tilde{x}_i,\tilde{v}_i) = 0 \text{ for all } i=1,\ldots,\tilde{m}\}
\end{equation*}
From standard properties of RKHS, it follows that $h_*$ must take the form:
\begin{equation*}
 h_*(\cdot) = \sum_{i=1}^{\tilde{m}} \alpha_i K((\tilde{x}_i,\tilde{v}_i),\cdot)
\end{equation*}
where $\alpha = (\alpha_i)\in\R^{\tilde{m}}$ is to be determined. Introducing $\tilde{\mathbf{K}} = [K((\tilde{x}_i,\tilde{v}_i),(\tilde{x}_j,\tilde{v}_j))]$ the matrix of the kernel evaluations between all pairs of points, which is positive definite, one has $\|h_*\|_W^2 = \alpha^T \tilde{\mathbf{K}} \alpha$.
On the other hand, we may define a second kernel matrix $\mathbf{K} = [K((x_i^{(k)},v_i^{(k)}),(\tilde{x}_j,\tilde{v}_j))] \in \R^{m \times \tilde{m}}$ so that the energy to minimize in \eqref{eq:regression_kernel} becomes:
\begin{equation*}
    J(\alpha) =  \lambda \alpha^T \tilde{K} \alpha + \frac{1}{R}\sum_{k=1}^{R} \left((w^{(k)})^T (\Pi^{(k)})^T K \alpha + \beta - y^{(k)}\right)^2
\end{equation*}
in which $w^{(k)} \in \R^{m_k}$ denotes the $k$-th block vector of $w$ and $\Pi^{(k)} \in \R^{m_k \times m}$ is the orthogonal projection that extracts the $k$-th block entries of a vector in $\R^{m}$. This amounts to the minimization of a quadratic function of $\alpha$ for which the optimality condition is:
\begin{equation*}
    \left[\lambda\tilde{\mathbf{K}} + K^T\left(\frac{1}{R} \sum_{k=1}^R (\Pi^{(k)})^T w^{(k)} (w^{(k)})^T  \Pi^{(k)} \right) K \right]\alpha_* + K^T(w/R) \beta= K^T(w \otimes y/R) 
\end{equation*}
where we use the notation $w \otimes y$ for the vector made of the $R$ blocks $y^{(k)} w^{(k)}$, $k=1,\ldots,R$. Ultimately, combining the above with \eqref{eq:regression_kernel1}, we see that the optimal $\alpha_*,\beta_*$ (and thus the solution of the kernel regression problem) are obtained by solving the linear system:
\begin{equation}
\label{eq:regression_kernel_solution}
    \begin{bmatrix}
        \lambda\tilde{\mathbf{K}} + K^T\left(\frac{1}{R} \sum_{k=1}^R (\Pi^{(k)})^T w^{(k)} (w^{(k)})^T  \Pi^{(k)} \right) K  & K^T(w/R) \\
        w^T K/R & 1
    \end{bmatrix}
    \begin{bmatrix}
        \alpha_* \\
        \beta_*
    \end{bmatrix}= 
    \begin{bmatrix}
        K^T(w \otimes y/R) \\
        \bar{y}
    \end{bmatrix}
\end{equation}
We note that the above block matrix is invertible for any $\lambda >0$, as can be checked using the Schur complement criterion for block matrices. It should be pointed, however, that \eqref{eq:regression_kernel_solution} is a very large system as its size is essentially of the order of the total number of Diracs in the whole training set which can be over 100 millions in some of the experiments of Section \ref{sec:numerical}. Furthermore, the linear system may be very ill-conditioned if $\lambda$ is not properly calibrated. Thus, to be practical, this kernel based approach would need to be complemented with sparse approximation methods to reduce the dimensionality of the solution space, an example being the Nystrom scheme considered in \cite{paul2024sparse}.     

\subsection{Neural network model}
\label{sec:netw_architecture}
As an alternative to the above kernel strategy, we will instead consider the more direct approach of modeling the test function via a neural network, which is now ubiquitous in the modern machine learning literature. We shall thus approximate $h$ using a multilevel perceptron (MLP), namely consider a set of trainable functions $h_\theta:\R^{2n}\to\R$ where $\theta$ denotes the trainable parameters of the MLP. It is then straightforward, based on the many existing universal approximation and expressivity theorems \cite{hornik1989multilayer,cybenko1989approximation,haykin1994neural,lu2017expressive}, to rephrase all the results of Section \ref{sec:theo_res} for functions $h$ that are expressible by a sufficiently wide and/or deep MLP. 

Given a discrete varifold $\mu = \sum_{i=1}^m w_i \delta_{(x_i,v_i)}$, we pass each support $(x_i,v_i)\in\R^n\times S^{n-1}\subseteq\R^{2n}$ through our MLP to compute a vector $H = [h_\theta(x_1,v_1),...,h_\theta(x_m,v_m)]$. We then view the weights of $\mu$ as a vector $W = [w_1,...,w_m]$ and compute $$H^TW+\beta=\sum_{i=1}^mw_ih_\theta(x_i,v_i) +\beta=\langle \mu,h_\theta\rangle+\beta$$ 
where $\beta$ is a trainable bias term.
For the linear regression problem in $\mathcal{V}$, we train this model in a supervised manner. Similar to the previous kernel approach, we minimize the mean squared error loss to the known labels:
\[
\operatorname{MSE}(\theta,\beta) = \frac{1}{R} \sum_{j=1}^R \left( \langle \mu_j,h_\theta\rangle + \beta - y_j \right)^2.
\]
\begin{figure}[h!]
\centering
  \includegraphics[width=.9\textwidth]{ 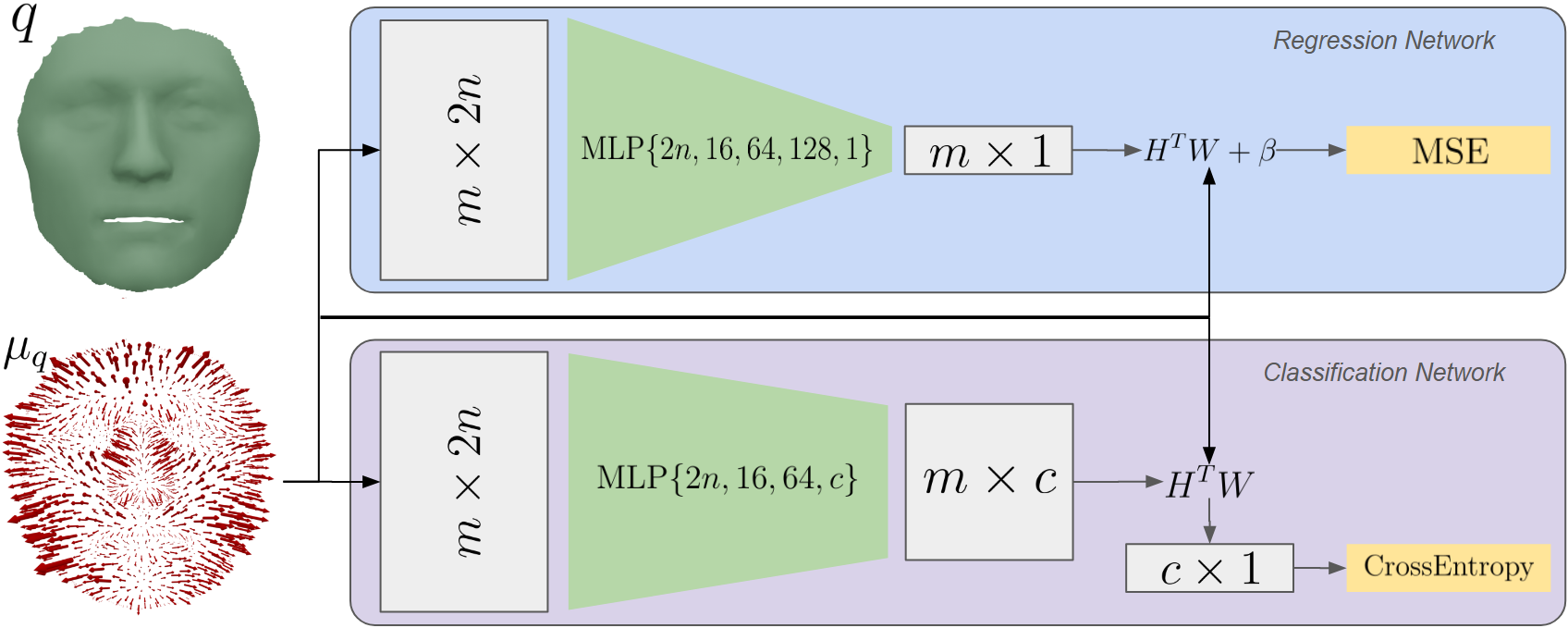} 
  \caption{Model Summary: A mesh $q$ is mapped to its varifold representation $\mu_q$. The supports of the varifold are passed through an MLP and the resulting values are integrated with respect to the masses of $\mu_q$. For regression, a trainable bias term is added to integral. For classification, the integral produces a $c$-dimensional vector viewed as logits of the probability that $q$ belongs to each class.}
  \label{fig:model_schematic}
\end{figure}
To perform multi-class classification of varifolds, we slightly modify the architecture to learn a vector valued functions $h:\R^n \times S^{n-1}\to \R^c$ where $\int_{\R^n\times S^{n-1}} h \operatorname{d}\mu$ is a $c$-dimensional vector where each entry is interpreted as a logit of the probability that $\mu$ belongs to the corresponding class. The function $h$ is again modeled via a multi-layer perceptron (MLP) $h_\theta:\R^{2n}\to\R^c$. For $\mu = \sum_{i=1}^m w_i \delta_{(x_i,u_i)}$, we evaluate $h_\theta$ on the supports to obtain an $m\times c$ matrix $H$ with each row being the vector $h_\theta(x_i,u_i)$. Therefore, 
$H^TW = \langle \mu, h_{\theta}\rangle$
yields a $c$-dimensional logit vector. These logits are passed through a softmax and used with the standard cross-entropy loss for classification. A schematic of both models is outlined in Figure \ref{fig:model_schematic}.

\begin{remark}[Invariance to Orientation]
    It is important to note that the model presented so far relies on the orientation of the tangent/normals to the shapes, meaning that one assumes a consistent orientation of the different shapes in the dataset. For certain applications that involve non-oriented data, it is necessary to make the SVarM approach invariant to the orientation of $v_i$. This can easily be achieved by taking $\widehat{h_\theta} = h_\theta(x,v)+ h_\theta(x,-v)$ and computing $\langle\mu,\widehat{h_\theta}\rangle+\beta$. In particular, in the applications to shape graphs of Section \ref{sec:numshapegraphs}, we rely on this modification to remain robust to reorientation of the edges of the shape graphs.
\end{remark}

\section{Numerical Experiments}\label{sec:numerical}
In this section, we present a series of numerical experiments designed to evaluate the performance, generalizability, and efficiency of our proposed model across a range of tasks involving 3D data. We consider three distinct scenarios: rotational alignment of 3D human body scans, classification of remeshed surface representations of MNIST digits and classification of facial graph and surface data extracted from the COMA dataset. These experiments demonstrate the flexibility of our model across diverse input types and provide quantitative and qualitative results with comparisons to baseline methods. For each task, we outline the data preprocessing steps, describe the experimental setup, and report results using standard performance metrics. In the final section, we present a series of ablation and robustness experiments to highlight the stability of our model under noise and occlusions. These analyses provide deeper insight into the factors driving the model’s performance and demonstrate its robustness across a range of challenging conditions. All of the models utilize sigmoid activation functions and are trained using the Adam optimizer with a fixed learning rate of 0.005 for 100 epochs to ensure consistency in the presented results. All experiments were conducted on a standard home PC equipped with an Intel 3.2 GHz CPU and a GeForce GTX 2070 GPU (1620 MHz). A Pytorch implementation of the SVarM framework is publicly available on GitHub at \url{https://github.com/SVarM25/SVarM}.

\subsection{Regression for Single Axis Rotation Alignment of Human Bodies}\label{ssec:regression_exp}
Our first experiment evaluates the effectiveness of our regression model on a toy dataset consisting of 3D human body scans from the FAUST dataset \cite{Bogo:CVPR:2014} rotated along a fixed axis. In many real-world applications involving 3D data—such as human pose estimation, animation, or medical imaging—scans are often misaligned due to variations in sensor orientation, subject posture, or scanning artifacts. Such rotational misalignment can significantly degrade the performance of downstream tasks. Therefore, estimating and correcting the rotation of 3D scans is a crucial preprocessing step.
\begin{figure}[h!]
  \centering
       \includegraphics[width=1.0\textwidth]{ 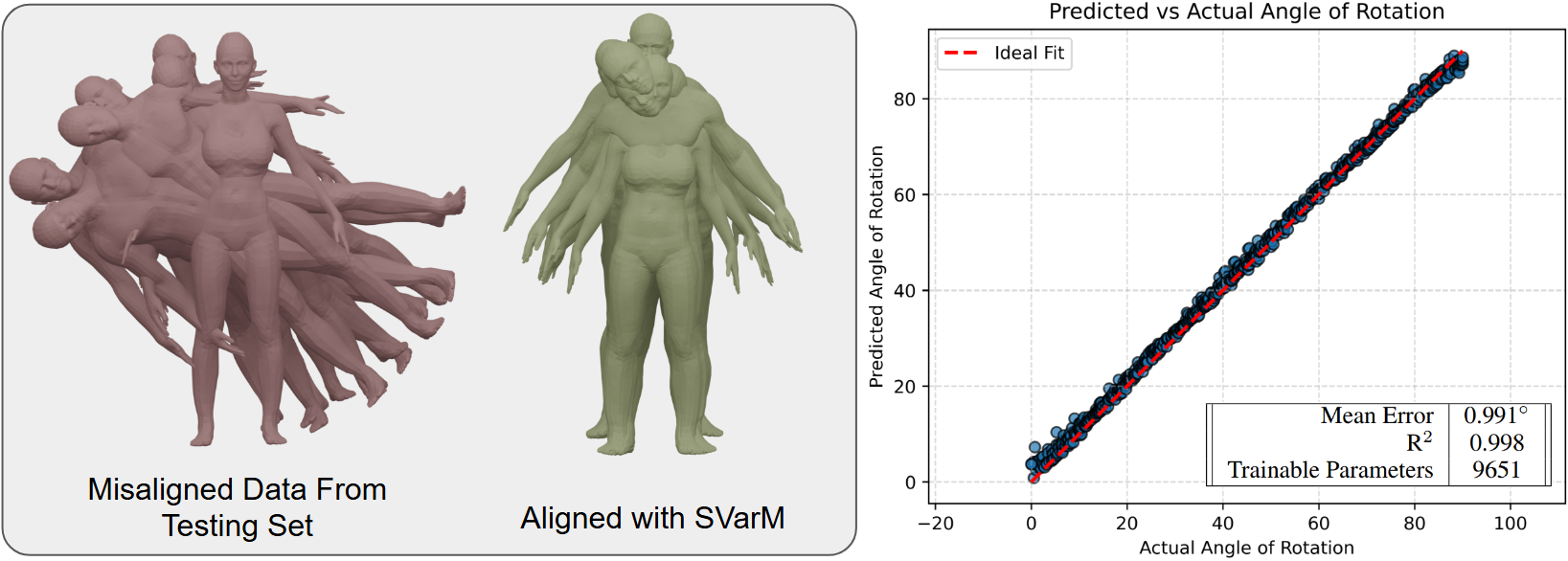}
       \includegraphics[width=1.0\textwidth]{ 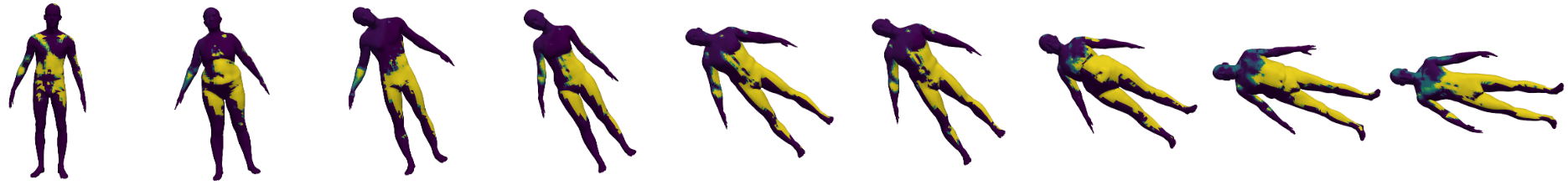}
      \caption{Regression on rotation angle around a fixed axis. We display a selection of human scans from our testing set of misaligned data (red) and present same data rotated by the angle predicted by the SVarM regression model (yellow). Additionally, we plot of the predicted angles against the ground truth on the entire testing set (right) reporting the mean error and correlation coefficient on the entire test set. Lastly, we display the learned function $h$ evaluated on several meshes from the testing data set where the color of each face corresponds to the value of $h(c_f,v_f)$.}
      \label{fig:rotation_regression} 
\end{figure}

To address this, we frame rotational alignment as a regression problem, where the model learns to predict the angle by which a given mesh has been rotated. The goal is to realign each mesh to a canonical upright position. We generate our dataset by applying known rotations to a subset of the FAUST 3D human body scans along a single axis, storing the corresponding angles as labels resulting in ~8000 labeled meshes. Our model is trained on ~90\% of this data and evaluated on the remaining ~10\%. In Figure~\ref{fig:rotation_regression}, we present a selection of dataset samples and demonstrate the model’s ability to accurately predict rotation angles and realign the meshes. Moreover, to highlight the interpretability of our model, we display the evaluation of the learned test function $h$ on samples from the dataset where the color of each face of the mesh corresponds to the evaluation of the test function at $(c_f,n_f)$. We note that the meshes that are misaligned by a larger angle have a higher density of faces with larger values of $h$.
\subsection{Regression for Full Rotation Alignment of Human Data}\label{ssec:full_rotation_exp}
Building on our initial experiment with scalar angle regression, we next evaluate the ability of our model to predict full 3D rotation matrices. While predicting rotation around a fixed axis simplifies the problem, many real-world scenarios involve arbitrary rotations in 3D space. Consequently, modeling the full rotation is essential for general applicability to object alignment.

\begin{figure}[h!]
  \centering
       \includegraphics[width=\textwidth]{ 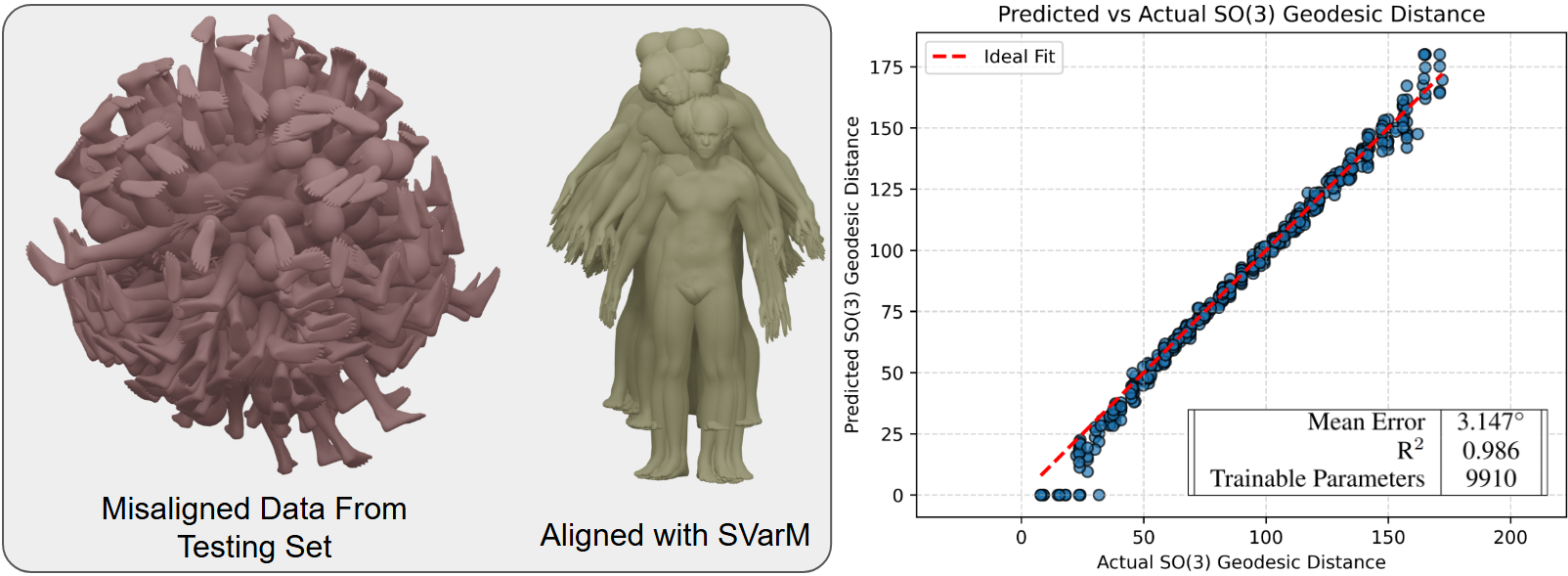}
      \caption{Multidimensional Regression to predict rotation matrices. We display a selection of misaligned data (red) and present same data rotated by the matrix predicted by the SVarM regression model (yellow). Additionally, we plot the geodesic distance in $SO(3)$ from the identity for the predicted matrices against the ground truth  matrices (right) reporting the mean error and correlation coefficient on the entire test set.}
      \label{fig:rotation_regression_SO3} 
\end{figure}
The training procedure mirrors our previous setup: we generate synthetic data by applying random 3D rotations to FAUST meshes and use the known rotation matrices as supervision. This results in a set of 12840 meshes and train the model on 80\% of the data and test it on the remaining 20\%. In Figure~\ref{fig:rotation_regression_SO3}, we display test examples of meshes before and after applying the predicted rotation matrix. In addition, we report quantitative metrics, including the average error and the coefficient of determination for the geodesic distances in $SO(3)$ of the rotation matrices. This experiment demonstrates that our regression framework can be extended from constrained single-axis alignment to general 3D rotational correction, paving the way for integration into more complex pipelines involving arbitrary pose variations and sensor noise.

To further highlight our method's effectiveness at solving rotation alignment problems we compare with results attained by directly optimizing over the rotation group to minimize the varifold distance (some explanation probably required here).  This approach requires no training time but requires 15-30s per mesh to solve the alignment problem. Directly solving the optimization for each element of the testing set produces a mean error of $8.0558^\circ$ with an $R^2$ value of $0.799$. By comparison, our trained model requires less than 15s to align the entire testing set and produces a mean error of $3.147^\circ$ with an $R^2$ value of $0.986$.

\subsection{Classification of MNIST Digits}\label{ssec:mnist_exp}
\begin{figure}[h!]
  \centering
  \includegraphics[width=1.0\textwidth]{ 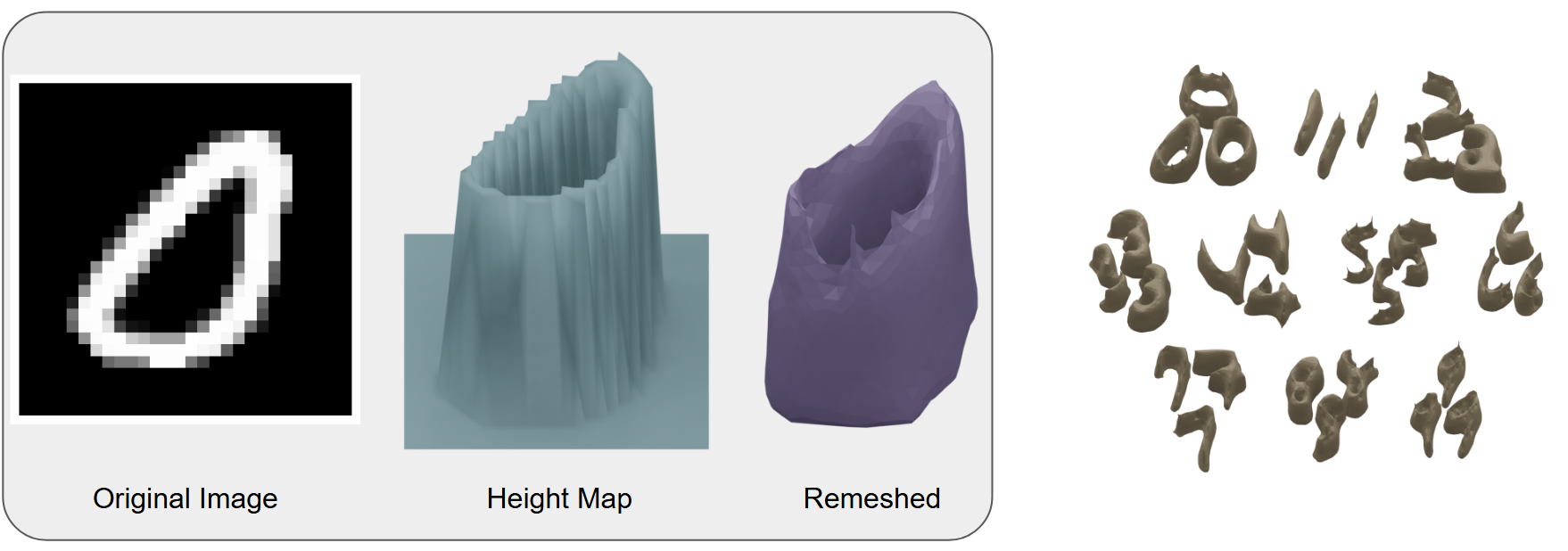} 
\centering
\begin{tabular}{|| r | c c c c | c c ||} 
 \hline
&\multicolumn{4}{c|}{Image Models}&\multicolumn{2}{c||}{3D Models}\\ [0.5ex] 
&Eff. CapsNet&DNN-5&ConvPMM&PMM&PointNet&SVarM\\
&\cite{Mazzia2021EfficientCapsNet}&\cite{DNN}&\cite{cook2025parametricmatrixmodels}&\cite{cook2025parametricmatrixmodels}&\cite{Charles_PointNet_2017}&\\
 \hline
Accuracy&\textbf{99.84\%}&97.20\%&98.99\%&97.38\%& 90.28\%&\textbf{97.67\%}\\ 
Trainable Parameters&161824&575051&129416&\textbf{4990}&1604883&\textbf{1850}\\
 \hline
\end{tabular}
\caption{MNIST Surface Construction and Results. An image from the MNIST dataset is converted to a 3D height map surface and remeshed, with examples of this preprocessing shown for each class of the MNIST dataset. We compare several methods operating either on the images or surfaces, reporting accuracy and trainable parameters to highlight our method's efficiency.}\label{fig:mnist}
\end{figure}
We next evaluate SVarM on the task of classifying handwritten digits from the well-known MNIST dataset. To adapt this image dataset to our framework, following a similar setup to \cite{kostrikov2018surface}, we convert each digit into a 3D surface representation by treating the grayscale pixel values as a height map. We then apply a remeshing operation to generate closed surface meshes corresponding to each digit. A schematic of the conversion process, along with representative examples of the resulting surface meshes, is shown in Figure~\ref{fig:mnist}. This pre-processing is not computationally burdensome and requires $<$20 minutes for the entire MNIST dataset.

We compare our model’s performance to that of several baseline methods that operate directly on the original image data and are designed to be parameter-efficient. While two of these models (Eff. CapsNet and ConvPMM) achieve slightly higher accuracy, they involve significantly more trainable parameters. In contrast, SVarM achieves competitive accuracy while maintaining a much smaller parameter footprint, underscoring its efficiency and suitability for low-resource learning settings.

\subsection{Classification of Shape Graphs}\label{sec:numshapegraphs}
To further evaluate the effectiveness of the SVarM model for classification tasks, we train the model on two separate datasets of shape graphs. First, we train SVarM on a dataset of shape graphs derived from the CoMA dataset \cite{COMA:ECCV18}, which contains 3D surface scans of human faces. This dataset is comprised of approximately 21,000 scans of 12 distinct individuals. From each scan, we extract a shape graph by identifying a consistent set of facial landmarks and connecting them via geodesic paths along the surface of the mesh. A schematic illustration of this shape graph construction process is provided in Figure~\ref{fig:face_graphs}. We should note that shape graphs, just as curves, are naturally embedded in the space of varifolds by simple addition in $\mathcal{V}$ of the varifolds associated to each separate branch.
\begin{figure}[h!]
\centering
\begin{minipage}[c]{.5\textwidth}
  \centering
  \includegraphics[width=\textwidth]{ 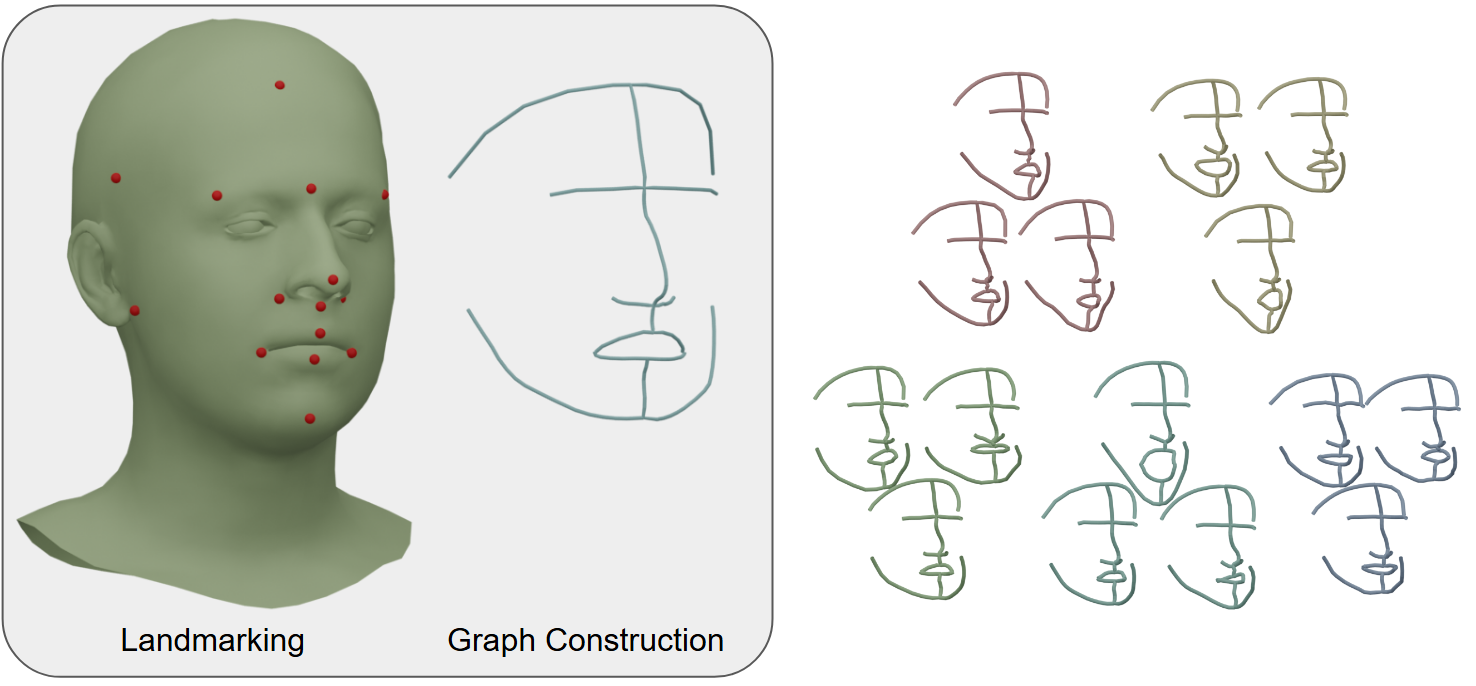} 
\end{minipage}
\begin{minipage}[c]{.45\textwidth}
\begin{tabular}{|| r | c | c ||} 
 \hline
Method & Accuracy & Parameters \\ 
 \hline
PointNet\cite{Charles_PointNet_2017} & 64.32\% & 1,605,397 \\
DGCNN\cite{wang2019dynamic} & 83.56\% & 1,014,924 \\
VariGrad\cite{hartman2023varigrad} & 91.28\% & 586,652 \\
SVarM & \textbf{99.98\%} & \textbf{1,980} \\
 \hline
\end{tabular}
\end{minipage}
\caption{COMA Graph Construction and Results. A schematic illustrating the process of extracting a shape graph from a surface in the COMA dataset with example graphs from 5 of the 12 classes. We compare the classification accuracy and model size for several models evaluated on this dataset.}\label{fig:face_graphs}
\end{figure}
We split the data into training and test sets, using 80\% of the scans for training and the remaining 20\% for testing. To assess performance, we compare our proposed model against three baseline models commonly used for 3D shape classification. Our model achieves significantly higher classification accuracy across all metrics. A quantitative summary of these results is presented in Figure~\ref{fig:face_graphs}.

\begin{figure}[h!]
  \centering
  \includegraphics[width=\textwidth]{ 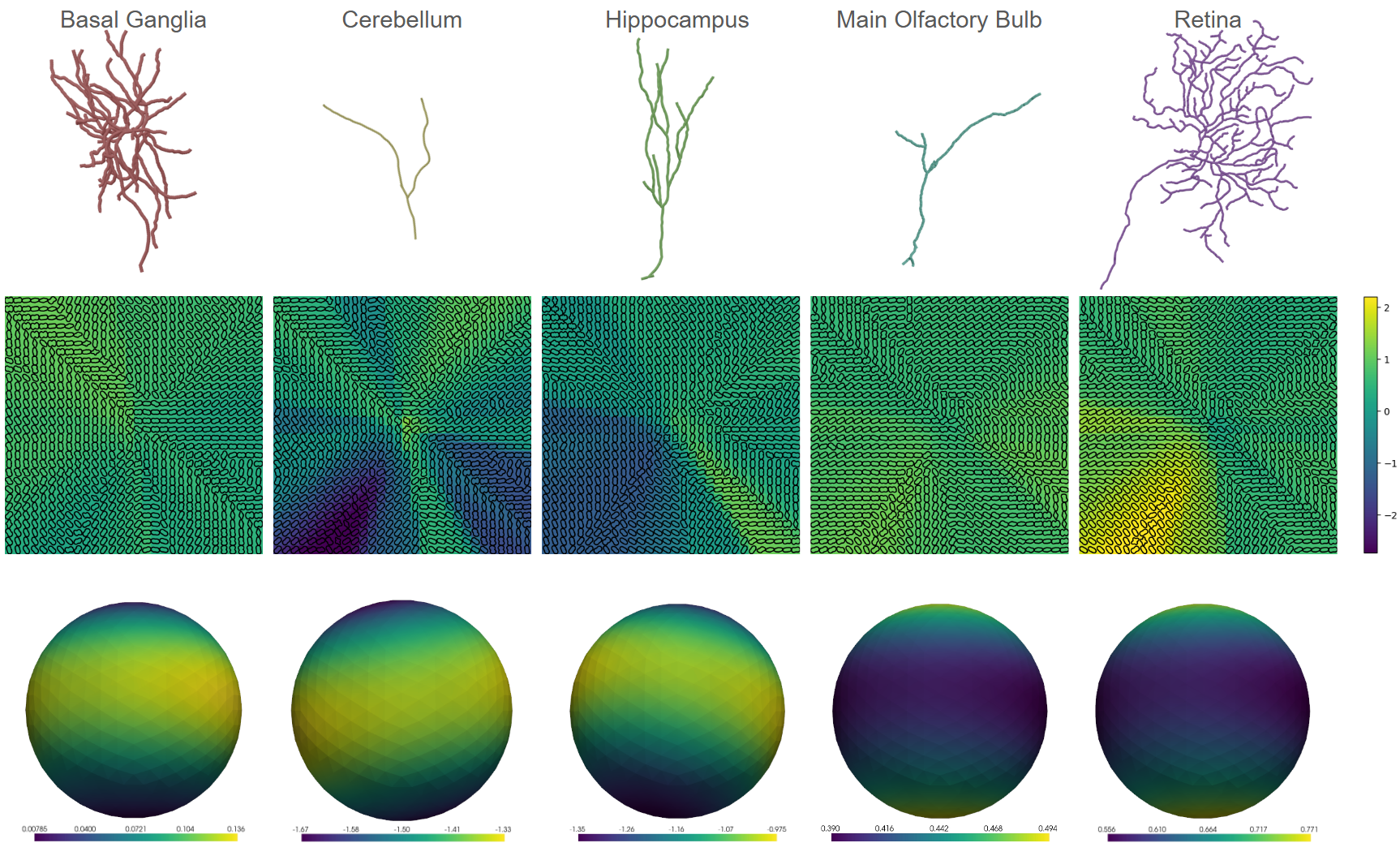} 
  \caption{Neuron Shape Graphs. Examples of Neuron shape graphs from five different regions of of the mouse nervous system. The second row displays the learned test function for each class in the same coordinates orientation as in the first row. Color represents the average of the function at a given spatial location over the spherical component, while black curves on each pixel illustrate the direction profile of the function at that point. Lastly, in the third row, we show the overall spherical profile of each test function averaged over all spatial points within the image range.}\label{fig:neurons}
\end{figure}
The second dataset consists of 3D shape graphs representing neurons from five distinct regions of the mouse nervous system. This dataset includes 443 neuron graphs, categorized into the Basal Ganglia, Cerebellum, Hippocampus, Main Olfactory Bulb, and Retina. Representative examples from each class are shown in Figure~\ref{fig:neurons}. We train the model on ~80\% neuron graphs and evaluate its performance on the remaining ~20\%. The trained model correctly classifies 97.73\% of the shape graphs in the testing set and Figure~\ref{fig:neurons} shows the spatial and directional profiles of the estimated test functions for each class. This experiment highlights the ability of the SVarM model to be trained on smaller datasets without overfitting. In particular, this model is over-parameterized with 1525 trainable parameters and achieves similar accuracy on the training data as on the unseen data.

\subsection{Robustness Studies}
\label{ssec:robustness}
In our final set of experiments, we seek to highlight SVarM's robustness to imaging noise and missing data. We train our classification model on ~80\% of the human face surface data from the CoMA dataset. On the testing data, we remove faces from the meshes and report the results on the altered meshes. A visual representation of the meshes with missing faces is shown in Figure \ref{fig:robustness_missing_faces}. We repeat this process, increasing the rate of missing faces and report the classification accuracy in Figure \ref{fig:robustness_missing_faces}. 

\begin{figure}[h!]
\centering
\begin{minipage}[c]{.17\textwidth}
    \centering
    \includegraphics[width=\linewidth]{ 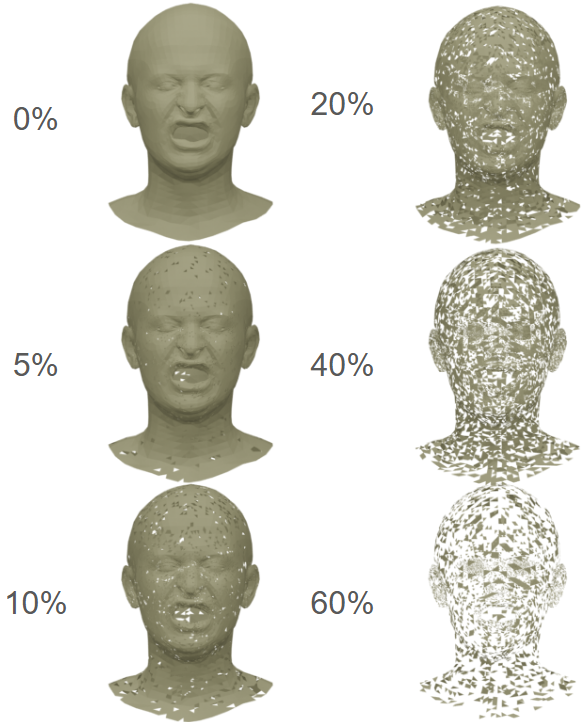}
\end{minipage}
\begin{minipage}[c]{.2\textwidth}
    \centering
    \begin{tabular}{|| c | c ||} 
      \hline
      & SVarM \\[0.5ex]
      \hline
       0\% & 99.9\%\\
       5\% & 99.8\%\\ 
      10\% & 98.9\% \\ 
      20\% & 96.3\%\\ 
      40\% & 85.6\% \\ 
      60\% & 67.3\% \\ 
      \hline
    \end{tabular}  
\end{minipage}
\begin{minipage}[c]{.17\textwidth}
    \centering
    \includegraphics[width=\linewidth]{ 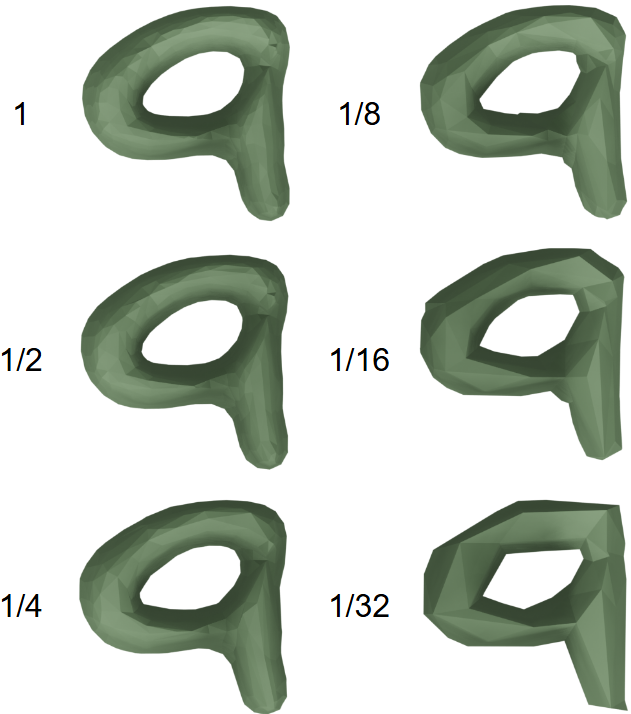}
\end{minipage}
\begin{minipage}[c]{.3\textwidth}
    \centering
    \begin{tabular}{|| c | c c ||} 
      \hline
      & SVarM& PointNet\\[0.5ex]
      \hline
       1 & 97.7\%& 90.3\%\\
       1/2 & 97.7\%& 89.6\%\\ 
       1/4  & 97.4\%& 87.7\%\\ 
       1/8  & 97.3\%& 79.0\%\\ 
       1/16 & 91.5\%& 64.4\%\\ 
       1/32 & 86.2\%& 54.6\%\\ 
      \hline
    \end{tabular}
\end{minipage}
\caption{Robustness experiments under different levels of missing data and mesh decimation. }\label{fig:robustness_missing_faces}
\end{figure}
Furthermore, we consider SVarM's robustness to parameterization by testing the classification model trained on the MNIST digits (c.f. Section \ref{ssec:mnist_exp}) on down-sampled versions of the testing data. For each mesh in the testing set we apply a mesh decimation operation to reduce the number of faces of the mesh to a fraction of the original and report the results on this down-sampled data.  We repeat this process for smaller ratios of total faces and report the classification accuracy in Figure \ref{fig:robustness_missing_faces}. A visual representation of the down-sampled meshes for different ratios of total faces Figure \ref{fig:robustness_missing_faces}.

Finally, we explore the robustness of the SVarM model for a regression problem. We consider the trained regression model discussed in Section \ref{ssec:regression_exp}, we apply the face removal process to the testing set and report the results  across varying rates of missing faces. As the model’s predictions are sensitive to the total mass of the input varifolds, its performance deteriorates significantly when a large proportion of faces is removed. To mitigate this issue, we introduce a simple yet effective correction: we normalize the total mass of each varifold to a fixed value. This normalization ensures consistency in the model's input representations, regardless of the number of removed faces. In Figure \ref{fig:regression_robustness}, we illustrate the impact of missing faces on the regression performance, both with and without normalization, demonstrating the robustness of the SVarM regression model, as well as, the improvement gained by re-normalizing the masses.
\begin{figure}[h!]
\centering
    \includegraphics[width=.68\linewidth]{ 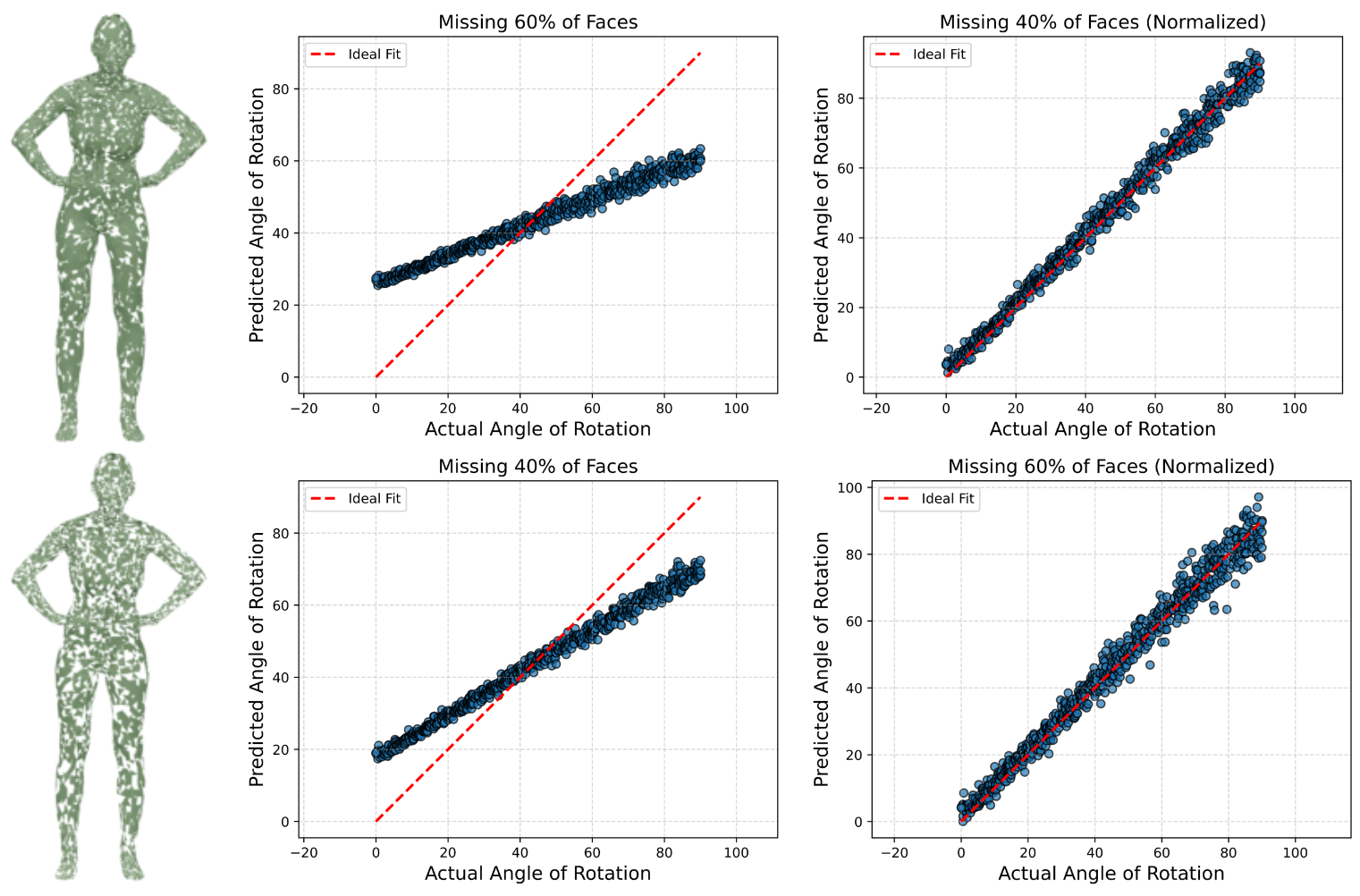}  
    \begin{tabular}{|| c | c  c | c c ||} 
      \hline
      &\multicolumn{2}{c|}{Regression}&\multicolumn{2}{c||}{Regression Rescaled}\\
      \hline
      Missing& Mean Error ($^\circ$) & R$^2$ & Mean Error ($^\circ$) & R$^2$ \\[0.5ex]
      \hline
       0\% &   0.991$^\circ$ &  0.998 & 0.991$^\circ$ & 0.998 \\
       5\% &  1.852$^\circ$ &  0.993 & 1.120$^\circ$ & 0.997 \\ 
      10\% & 2.875$^\circ$ &  0.983 & 1.199$^\circ$ & 0.996 \\ 
      20\% & 5.080$^\circ$ & 0.949 & 1.417$^\circ$ & 0.995 \\ 
      40\% &  9.355$^\circ$ & 0.827 & 1.835$^\circ$ & 0.992 \\ 
      60\% & 13.841$^\circ$ & 0.6245 & 2.582$^\circ$ & 0.984 \\ 
      \hline
    \end{tabular}
  \caption{Robustness experiments for regression on single axis angle of rotation under different proportions of missing faces. }\label{fig:regression_robustness}
\end{figure}

\section{Towards nonlinear regressors and classifiers in $\mathcal{V}$}
In spite of the mathematical and numerical results presented in this paper, one of the obvious limitations of the SVarM model is that it is restricted to regression and classification functionals that are affine on the space of varifolds. Although we showed that these may be enough to separate disjoint sets of shapes under relatively mild assumptions, the counterexample of Figure \ref{fig:counterex_sep} still illustrates that some simple configurations are not separable within the SVarM framework. A natural way of generalizing the approach would be thus to consider nonlinear functionals on $\mathcal{V}$, by analogy with the machine learning models based on neural network representations that are commonly used for data living on Euclidean spaces.

In the context of the infinite-dimensional space of varifolds $\mathcal{V}$, we may build nonlinear maps $\psi: \mathcal{V} \rightarrow \R$ via the composition of multiple affine forms on $\mathcal{V}$ with some nonlinear function on $\R$. More specifically, a simple single hidden layer architecture would consist in modeling $\psi$ as follows:
\begin{equation}
\label{eq:nonlin_Var_fun}
 \psi(\mu) = \sum_{k=1}^{r} c_k \sigma(\langle \mu, h_k \rangle + \beta_k)
\end{equation}
where $h_k \in C_0(\R^n \times S^{n-1})$ are a set of different test functions (each modeled for instance via an MLP as in Section \ref{sec:netw_architecture}), $\beta_k \in \R$ the associated biases, $\sigma:\R \rightarrow \R$ is a nonlinear function (such as RELU, tanh, sigmoid$\ldots$) and $c_k \in \R$ are the coefficients of the output linear function. In the infinite width limit $r \rightarrow \infty$, one can approximate any weak-* continuous function $\mathcal{V} \rightarrow \R$:
\begin{thm}
Let $\sigma: \R \rightarrow \R$ be a continuous non-polynomial function, $C$ a weak-* compact subset of $\mathcal{V}$ and $\psi:C \rightarrow \R$ a weak-* continuous function on $C$. Then for any $\varepsilon >0$, there exist $r \in \mathbb{N}$, $(h_k)_{k=1,\ldots,r}$ in $C_0(\R^n \times S^{n-1})$, $(\beta_k) \in \R^r$ and $(c_k) \in \R^{r}$ such that for all $\mu \in C$: 
\begin{equation*}
\left|\psi(\mu) - \sum_{k=1}^{r} c_k \sigma(\langle \mu, h_k \rangle + \beta_k) \right| \leq \varepsilon
\end{equation*}
\end{thm}
The statement follows directly from the more general result of Theorem 2.1 in \cite{ismailov2024UAT} since the space $\mathcal{V}$ equipped with the weak-* convergence topology is a locally convex space which continuous linear forms are exactly of the form $\mu \mapsto \langle \mu , h \rangle$ (c.f. Proposition \ref{prop:SVarM_affine_maps}). Combined with our earlier remarks in Section \ref{sec:netw_architecture}, this implies that it is possible to approximate any weak-* continuous (nonlinear) function on $\mathcal{V}$ by considering sufficiently many test functions each represented with sufficiently wide (or deep) MLPs.  

Using such more general architectures should theoretically enable to address, in particular, the nonseparability issues of SVarM. For instance, it is easy to derive a function $\psi$ of the form \eqref{eq:nonlin_Var_fun} with only one test function $h$ that can separate the pairs of shapes in the example of Figure \ref{fig:counterex_sep}. Namely, if $h$ is defined so that $\langle \mu,h \rangle$ corresponds to the number of balls on the upper half of the object and $\sigma(x)= 2^{-|x-1|^2}$, then $\psi=1$ for the green surfaces while $\psi=1/2$ for the red ones. We leave it to future work, however, to implement and experiment these types of architectures on shape regression and classification problems. 

\section{Conclusion and future perspectives}
In this work, we propose a learning framework on datasets of curves, surfaces and shape graphs that leverages the (nonlinear) embedding of shapes into varifold space. We focus on this particular measure space as it provides a parameterization-invariant representation of embedded shapes, making the model particularly well-suited for geometric datasets. Analogous to support vector machines—but in the infinite-dimensional setting of varifolds—the SVarM model then learns affine forms $\mu \mapsto \langle \mu,h\rangle + \beta$ to solve regression and classification problems involving 3D geometric data, by approximating a test function $h\in C_0(\R^n\times S^{n-1},\R)$ and bias $\beta$. We derive some theoretical guarantees on SVarM's capacity to approximate arbitrary affine forms, identify conditions under which sets of varifolds or embedded shapes can be linearly separated, and establish stability results for the model. In addition, we present a series of numerical experiments demonstrating the model’s effectiveness in both regression and classification tasks across various geometric datasets. These experiments highlight the model's computational efficiency and its robustness to imaging noise, including missing faces and sampling changes. The results also showcase the fact that SVar can achieve comparable (and in some cases superior) accuracy to geometric deep learning models but with an overall much smaller number of trainable parameters, making it in line with recent efforts towards developing lightweight deep learning architectures \cite{liu2024lightweight}. 

While we focused on learning affine functions, we also outlined a more general approach to build and learn nonlinear operators over the space of varifolds, which we plan to implement and experiment for various types of supervised or unsupervised problems involving shape datasets in the future. Furthermore, although the present work was primarily concerned with objects such as curves and surfaces, the versatility of the varifold representation makes it possible, in principle, to extend SVarM to other types of geometric data structures, with two cases of interest being high-angular resolution diffusion images \cite{tuch2002high} and spatial transcriptomics data \cite{stouffer2024cross} as these naturally fit the varifold setting. Lastly, an interesting aspect that has not yet been explored in this paper is the use, during training, of regularization penalties on the weights of the neural network defining $h$. Such penalties could enforce certain constraints on $h$ so as to potentially increase the qualitative interpretability of the learned test function.

\section*{Acknowledgments}
This work was supported by the National Science Foundation through the grants DMS-2438562 and DMS-2426550. The authors would also like to thank Murad Hossein for his assistance with the mouse neuron traces dataset used in the numerical experiments section.  

\bibliographystyle{abbrv}

\begin{thebibliography}{10}

\bibitem{Allard72}
W.~Allard.
\newblock On the first variation of a varifold.
\newblock {\em Annals of mathematics}, 95(3), 1972.

\bibitem{Almgren66}
F.~Almgren.
\newblock {\em Plateau's Problem: An Invitation to Varifold Geometry}.
\newblock Student Mathematical Library, 1966.

\bibitem{Aronszajn1950}
N.~Aronszajn.
\newblock Theory of reproducing kernels.
\newblock {\em Trans. Amer. Math. Soc.}, 68:337--404, 1950.

\bibitem{bal2024statistical}
A.~B. Bal, X.~Guo, T.~Needham, and A.~Srivastava.
\newblock Statistical analysis of complex shape graphs.
\newblock {\em IEEE Transactions on Pattern Analysis and Machine Intelligence}, 2024.

\bibitem{bauer2019relaxed}
M.~Bauer, M.~Bruveris, N.~Charon, and J.~M{\o}ller-Andersen.
\newblock A relaxed approach for curve matching with elastic metrics.
\newblock {\em ESAIM: Control, Optimisation and Calculus of Variations}, 25:72, 2019.

\bibitem{bauer2014overview}
M.~Bauer, M.~Bruveris, and P.~W. Michor.
\newblock {O}verview of the {G}eometries of {S}hape {S}paces and {D}iffeomorphism {G}roups.
\newblock {\em Journal of Mathematical Imaging and Vision}, 50(1):60–97, 2014.

\bibitem{besnier2023toward}
T.~Besnier, S.~Arguill{\`e}re, E.~Pierson, and M.~Daoudi.
\newblock {Toward mesh-invariant 3D generative deep learning with geometric measures}.
\newblock {\em Computers \& Graphics}, 115:309--320, 2023.

\bibitem{bhadra2025scalar}
S.~Bhadra and A.~Srivastava.
\newblock Scalar on shape regression using function data.
\newblock In {\em International Workshop on Functional and Operatorial Statistics}, pages 43--50. Springer, 2025.

\bibitem{Bogo:CVPR:2014}
F.~Bogo, J.~Romero, M.~Loper, and M.~J. Black.
\newblock {FAUST}: Dataset and evaluation for {3D} mesh registration.
\newblock In {\em Proceedings IEEE Conf. on Computer Vision and Pattern Recognition (CVPR)}, Piscataway, NJ, USA, June 2014. IEEE.

\bibitem{brezis2011functional}
H.~Br{\'e}zis.
\newblock {\em Functional analysis, Sobolev spaces and partial differential equations}, volume~2.
\newblock Springer, 2011.

\bibitem{bronstein2006generalized}
A.~M. Bronstein, M.~M. Bronstein, and R.~Kimmel.
\newblock Generalized multidimensional scaling: a framework for isometry-invariant partial surface matching.
\newblock {\em Proceedings of the National Academy of Sciences}, 103(5):1168--1172, 2006.

\bibitem{bronstein2017geometric}
M.~M. Bronstein, J.~Bruna, Y.~LeCun, A.~Szlam, and P.~Vandergheynst.
\newblock Geometric deep learning: going beyond euclidean data.
\newblock {\em IEEE Signal Processing Magazine}, 34(4):18--42, 2017.

\bibitem{buet2022weak}
B.~Buet, G.~P. Leonardi, and S.~Masnou.
\newblock Weak and approximate curvatures of a measure: a varifold perspective.
\newblock {\em Nonlinear Analysis}, 222:112983, 2022.

\bibitem{cao2020comprehensive}
W.~Cao, Z.~Yan, Z.~He, and Z.~He.
\newblock A comprehensive survey on geometric deep learning.
\newblock {\em IEEE Access}, 8:35929--35949, 2020.

\bibitem{carlsson2009topology}
G.~Carlsson.
\newblock Topology and data.
\newblock {\em Bulletin of the American Mathematical Society}, 46(2):255--308, 2009.

\bibitem{Charles_PointNet_2017}
R.~Q. Charles, H.~Su, M.~Kaichun, and L.~J. Guibas.
\newblock Pointnet: Deep learning on point sets for 3d classification and segmentation.
\newblock In {\em 2017 IEEE Conference on Computer Vision and Pattern Recognition (CVPR)}, page 77–85. IEEE, Jul 2017.

\bibitem{charon2013varifold}
N.~Charon and A.~Trouv{\'e}.
\newblock The varifold representation of nonoriented shapes for diffeomorphic registration.
\newblock {\em SIAM journal on Imaging Sciences}, 6(4):2547--2580, 2013.

\bibitem{cook2025parametricmatrixmodels}
P.~Cook, D.~Jammooa, M.~Hjorth-Jensen, D.~D. Lee, and D.~Lee.
\newblock Parametric matrix models, 2025.

\bibitem{cybenko1989approximation}
G.~Cybenko.
\newblock Approximation by superpositions of a sigmoidal function.
\newblock {\em Mathematics of control, signals and systems}, 2(4):303--314, 1989.

\bibitem{edelsbrunner2017persistent}
H.~Edelsbrunner and D.~Morozov.
\newblock Persistent homology.
\newblock In {\em Handbook of Discrete and Computational Geometry}, pages 637--661. Chapman and Hall/CRC, 2017.

\bibitem{gerken2023geometric}
J.~E. Gerken, J.~Aronsson, O.~Carlsson, H.~Linander, F.~Ohlsson, C.~Petersson, and D.~Persson.
\newblock Geometric deep learning and equivariant neural networks.
\newblock {\em Artificial Intelligence Review}, 56(12):14605--14662, 2023.

\bibitem{ghrist2008barcodes}
R.~Ghrist.
\newblock Barcodes: the persistent topology of data.
\newblock {\em Bulletin of the American Mathematical Society}, 45(1):61--75, 2008.

\bibitem{grenander1996elements}
U.~Grenander.
\newblock {\em Elements of pattern theory}.
\newblock JHU Press, 1996.

\bibitem{hanocka2019meshcnn}
R.~Hanocka, A.~Hertz, N.~Fish, R.~Giryes, S.~Fleishman, and D.~Cohen-Or.
\newblock Meshcnn: a network with an edge.
\newblock {\em ACM Transactions on Graphics (ToG)}, 38(4):1--12, 2019.

\bibitem{hartman2023varigrad}
E.~Hartman and E.~Pierson.
\newblock Varigrad: A novel feature vector architecture for geometric deep learning on unregistered data.
\newblock {\em Eurographics Workshop on 3D Object Retrieval}, 2023.

\bibitem{hartman2025basis}
E.~Hartman, E.~Pierson, M.~Bauer, M.~Daoudi, and N.~Charon.
\newblock Basis restricted elastic shape analysis on the space of unregistered surfaces.
\newblock {\em International Journal of Computer Vision}, 133(4):1999--2024, 2025.

\bibitem{haykin1994neural}
S.~Haykin.
\newblock {\em Neural networks: a comprehensive foundation}.
\newblock Prentice Hall PTR, 1994.

\bibitem{hornik1989multilayer}
K.~Hornik, M.~Stinchcombe, and H.~White.
\newblock Multilayer feedforward networks are universal approximators.
\newblock {\em Neural networks}, 2(5):359--366, 1989.

\bibitem{hsieh2021metrics}
H.-W. Hsieh and N.~Charon.
\newblock Metrics, quantization and registration in varifold spaces.
\newblock {\em Foundations of Computational Mathematics}, 21(5):1317--1361, 2021.

\bibitem{ismailov2024UAT}
V.~Ismailov.
\newblock Universal approximation theorem for neural networks with inputs from a topological vector space, 2024.

\bibitem{jermyn2012elastic}
I.~H. Jermyn, S.~Kurtek, E.~Klassen, and A.~Srivastava.
\newblock {E}lastic {S}hape {M}atching of {P}arameterized {S}urfaces {U}sing {S}quare {R}oot {N}ormal {F}ields.
\newblock In {\em European Conference on Computer Vision}, page 804–817. Springer, 2012.

\bibitem{jiang2019semi}
B.~Jiang, Z.~Zhang, D.~Lin, J.~Tang, and B.~Luo.
\newblock Semi-supervised learning with graph learning-convolutional networks.
\newblock In {\em Proceedings of the IEEE/CVF conference on computer vision and pattern recognition}, pages 11313--11320, 2019.

\bibitem{kaltenmark2017general}
I.~Kaltenmark, B.~Charlier, and N.~Charon.
\newblock A general framework for curve and surface comparison and registration with oriented varifolds.
\newblock In {\em Proceedings of the IEEE conference on computer vision and pattern recognition}, pages 3346--3355, 2017.

\bibitem{kostrikov2018surface}
I.~Kostrikov, Z.~Jiang, D.~Panozzo, D.~Zorin, and J.~Bruna.
\newblock Surface networks.
\newblock In {\em Proceedings of the IEEE conference on computer vision and pattern recognition}, pages 2540--2548, 2018.

\bibitem{lee2024neural}
J.~Lee, X.~Cai, C.-B. Sch{\"o}nlieb, and S.~Masnou.
\newblock Neural varifolds: an aggregate representation for quantifying the geometry of point clouds.
\newblock {\em arXiv preprint arXiv:2407.04844}, 2024.

\bibitem{liu2024lightweight}
H.-I. Liu, M.~Galindo, H.~Xie, L.-K. Wong, H.-H. Shuai, Y.-H. Li, and W.-H. Cheng.
\newblock Lightweight deep learning for resource-constrained environments: A survey.
\newblock {\em ACM Computing Surveys}, 56(10):1--42, 2024.

\bibitem{lu2017expressive}
Z.~Lu, H.~Pu, F.~Wang, Z.~Hu, and L.~Wang.
\newblock The expressive power of neural networks: A view from the width.
\newblock {\em Advances in neural information processing systems}, 30, 2017.

\bibitem{mardia2009directional}
K.~V. Mardia and P.~E. Jupp.
\newblock {\em Directional statistics}.
\newblock John Wiley \& Sons, 2009.

\bibitem{Mazzia2021EfficientCapsNet}
V.~Mazzia, F.~Salvetti, and M.~Chiaberge.
\newblock Efficient-capsnet: Capsule network with self-attention routing.
\newblock {\em Scientific Reports}, 11:14634, 2021.

\bibitem{memoli2011gromov}
F.~M{\'e}moli.
\newblock Gromov--wasserstein distances and the metric approach to object matching.
\newblock {\em Foundations of computational mathematics}, 11:417--487, 2011.

\bibitem{michor2007overview}
P.~W. Michor and D.~Mumford.
\newblock {A}n {O}verview of the {R}iemannian {M}etrics on {S}paces of {C}urves {U}sing the {H}amiltonian {A}pproach.
\newblock {\em Applied and Computational Harmonic Analysis}, 23(1):74–113, 2007.

\bibitem{ovsjanikov2012functional}
M.~Ovsjanikov, M.~Ben-Chen, J.~Solomon, A.~Butscher, and L.~Guibas.
\newblock Functional maps: a flexible representation of maps between shapes.
\newblock {\em ACM Transactions on Graphics (ToG)}, 31(4):1--11, 2012.

\bibitem{paul2024sparse}
A.~Paul, N.~Campbell, and T.~Shardlow.
\newblock {Sparse Nystrom Approximation of Currents and Varifolds}.
\newblock {\em arXiv preprint arXiv:2406.09932}, 2024.

\bibitem{pierson20223d}
E.~Pierson, M.~Daoudi, and S.~Arguillere.
\newblock 3d shape sequence of human comparison and classification using current and varifolds.
\newblock In {\em European Conference on Computer Vision}, pages 523--539. Springer, 2022.

\bibitem{DNN}
E.~Pishchik.
\newblock Trainable activations for image classification.
\newblock {\em Preprints}, January 2023.

\bibitem{COMA:ECCV18}
A.~Ranjan, T.~Bolkart, S.~Sanyal, and M.~J. Black.
\newblock Generating {3D} faces using convolutional mesh autoencoders.
\newblock In {\em European Conference on Computer Vision (ECCV)}, pages 725--741, 2018.

\bibitem{rudin1991functional}
W.~Rudin.
\newblock Functional analysis 2nd ed.
\newblock {\em International Series in Pure and Applied Mathematics. McGraw-Hill, Inc., New York}, 1991.

\bibitem{simon2014introduction}
L.~Simon.
\newblock Introduction to geometric measure theory.
\newblock {\em Tsinghua Lectures}, 2(2):3--1, 2014.

\bibitem{srivastava2016functional}
A.~Srivastava and E.~P. Klassen.
\newblock {\em {F}unctional and {S}hape {D}ata {A}nalysis}, volume~1.
\newblock Springer, 2016.

\bibitem{stouffer2024cross}
K.~M. Stouffer, A.~Trouv{\'e}, L.~Younes, M.~Kunst, L.~Ng, H.~Zeng, M.~Anant, J.~Fan, Y.~Kim, X.~Chen, et~al.
\newblock {Cross-modality mapping using image varifolds to align tissue-scale atlases to molecular-scale measures with application to 2D brain sections}.
\newblock {\em Nature communications}, 15(1):3530, 2024.

\bibitem{sukurdeep2022new}
Y.~Sukurdeep, M.~Bauer, and N.~Charon.
\newblock A new variational model for shape graph registration with partial matching constraints.
\newblock {\em SIAM Journal on Imaging Sciences}, 15(1):261--292, 2022.

\bibitem{tuch2002high}
D.~S. Tuch, T.~G. Reese, M.~R. Wiegell, N.~Makris, J.~W. Belliveau, and V.~J. Wedeen.
\newblock High angular resolution diffusion imaging reveals intravoxel white matter fiber heterogeneity.
\newblock {\em Magnetic Resonance in Medicine: An Official Journal of the International Society for Magnetic Resonance in Medicine}, 48(4):577--582, 2002.

\bibitem{villani2008optimal}
C.~Villani.
\newblock {\em {O}ptimal {T}ransport: {O}ld and {N}ew}, volume 338.
\newblock Springer Science \& Business Media, 2008.

\bibitem{NEURIPS2018_d54e99a6}
A.~Virmaux and K.~Scaman.
\newblock Lipschitz regularity of deep neural networks: analysis and efficient estimation.
\newblock In S.~Bengio, H.~Wallach, H.~Larochelle, K.~Grauman, N.~Cesa-Bianchi, and R.~Garnett, editors, {\em Advances in Neural Information Processing Systems}, volume~31. Curran Associates, Inc., 2018.

\bibitem{wang2019dynamic}
Y.~Wang, Y.~Sun, Z.~Liu, S.~E. Sarma, M.~M. Bronstein, and J.~M. Solomon.
\newblock Dynamic graph cnn for learning on point clouds.
\newblock {\em Acm Transactions On Graphics (tog)}, 38(5):1--12, 2019.

\bibitem{younes2019shapes}
L.~Younes.
\newblock {\em Shapes and Diffeomorphisms}, volume 171.
\newblock Springer, 2019.

\end{thebibliography}

\medskip


\appendix
\newpage
\section{Ablation Studies}
\subsection{Choice of measure representation}\label{sec:Abl_without_normals}
In this work, we have focused on learning methods that are based on the varifold representation of shapes, i.e. by learning functions on the space $\mathcal{V} = \mathcal{M}(\mathbb{R}^n \times S^{n-1})$, with a shape $q$ being mapped to the varifold $(q, v_q)_* \operatorname{vol}q$. Varifolds combine the information of point position together with the direction of the tangent or normal vector. It is then interesting to compare SVarM against similar approaches but that rely on other possible measure representations. In particular, by marginalizing a varifold with respect to either of the two components $S^{n-1}$ or $\R^n$, one can quantify the advantage of using both the position and direction information for regression and classification problems. We thus compare the SVarM model to these two alternative measure representation, focusing on surface data. First we consider the space of usual \textit{spatial measures}, $\mathcal{M}(\mathbb{R}^n)$, where $q \mapsto q_{*} \operatorname{vol}q$. This representation captures only the distribution of points in space (with respect to the local area measure), discarding all directional information. Conversely, we also consider \textit{area measures} in $\mathcal{M}(S^{n-1})$, where $q \mapsto (v_q)_{*} \operatorname{vol}_q$, which only encodes the distribution of the normal vectors $v_q$ on the sphere independent of the position. Note that these obviously do not allow to uniquely identify a shape but are nevertheless objects central to the field of directional statistics \cite{mardia2009directional}.
\begin{figure}[h!]
    \centering
    \begin{tabular}{|| c | c c | c c | c ||} 
      \hline
      &\multicolumn{2}{c|}{1 Axis Rotation Regression}&\multicolumn{2}{c|}{Full Rotation Regression}&MNIST Classification\\
      \hline
      & Mean Error & R$^2$ & Mean Error & R$^2$ & Accuracy \\
      \hline
      $\mathcal{M}(\R^n)$&3.733$^\circ$& 0.967 & 4.302$^\circ$ & 0.954 & 90.69\% \\
      $\mathcal{M}(S^{n-1})$&6.534$^\circ$& 0.908 & 7.847$^\circ$ & 0.895 & 79.86\%\\
      $\mathcal{V}$& 0.991$^\circ$ & 0.998 & 3.147$^\circ$ & 0.986 & 97.67\% \\
      \hline
    \end{tabular}
  \caption{Ablation on Measure Representations of Shapes}\label{tab:ablation_measures}
\end{figure}

Both alternatives encode strictly less geometric information about the shape $q$ compared to the varifold representation. We retrain similar models for each of these measure representations, namely by learning a function on $\R^n$ or $S^{n-1}$ instead of the product space. The results of this comparison are reported in Table~\ref{tab:ablation_measures}. They indicate that neither of these simplified models achieves performance comparable to the full SVarM approach.

\subsection{Width and depth of the MLP representation of $h_\theta$}
We here present a second ablation study related to the architectural choices of the MLP parameterization of the test function $h_\theta$ in the SVarM architecture, specifically on the width and depth of the neural network. These results justify the choice of architecture used in the paper.  
\begin{figure}[h!]
    \centering
    \begin{tabular}{|| c | c c c ||} 
      \hline
      &\multicolumn{3}{c||}{MNIST Classification}\\
      \hline
      & Training Accuracy & Testing Accuracy & Trainable Parameters\\
      \hline
      $\operatorname{mlp}(6,8,32,10)$& 95.51\% & 95.12\% &674\\
      $\operatorname{mlp}(6,16,64,10)$& 98.12\%&97.67\%&1850\\
      $\operatorname{mlp}(6,32,128,10)$& 97.28\% & 97.47\% &5738\\
      \hline
    \end{tabular}
  \caption{Ablation on Width of MLP Parameterizations of Test Functions }\label{tab:ablation_width}
\end{figure}

Increasing the width of the hidden layers generally improves both training and testing accuracy. Notably, the jump from $\operatorname{mlp}(6,8,32,10)$ to $\operatorname{mlp}(6,16,64,10)$ significantly boosts performance, suggesting that a minimum representational capacity is needed for the model to generalize well. However, further increasing width to $\operatorname{mlp}(6,32,128,10)$ yields only marginal improvement in test accuracy, despite more than tripling the number of parameters. This indicates diminishing returns and potential over-parameterization.
\begin{figure}[h!]
    \centering
    \begin{tabular}{|| c | c c c ||} 
      \hline
      &\multicolumn{3}{c||}{MNIST Classification}\\
      \hline
      & Training Accuracy & Testing Accuracy & Trainable Parameters\\
      \hline
      $\operatorname{mlp}(6,64,10)$& 94.75\% &95.39\%&1098\\
      $\operatorname{mlp}(6,16,64,10)$& 98.12\%&97.67\%&1850\\
      $\operatorname{mlp}(6,16,64,64,10)$& 97.64\% & 96.18\% &6010\\
      \hline
    \end{tabular}
  \caption{Ablation on Depth of MLP Parameterizations of Test Functions}\label{tab:ablation_dept}
\end{figure}
Adding more hidden layers beyond a moderate depth appears to reduce generalization performance. While the two-hidden-layer model $\operatorname{mlp}(6,16,64,10)$ achieves the highest test accuracy, the deeper model $\operatorname{mlp}(6,16,64,64,10)$ over-fits slightly, achieving higher training accuracy but worse test accuracy. Conversely, the shallow model $\operatorname{mlp}(6,64,10)$ under-performs.

\end{document}